\DocumentMetadata{}
\documentclass[sigconf,10pt]{acmart}

\usepackage{url}
\usepackage{amsmath}
\usepackage{booktabs}
\usepackage{array}
\usepackage{xspace}
\usepackage{multicol}
\usepackage{array}
\usepackage{enumitem}
\usepackage{listings}
\usepackage{subcaption}
\usepackage{multirow}
\usepackage{mathtools}
\usepackage[capitalize,noabbrev,nameinlink]{cleveref}
\usepackage{algorithm}
\usepackage{algorithmic}

\crefname{lstlisting}{listing}{listings}
\Crefname{lstlisting}{Listing}{Listings}

\DeclareMathOperator*{\argmin}{\arg\min}

\newtheorem*{theorem*}{Theorem}
\newtheorem{theorem}{Theorem}[section]

\newtheorem{lemma}[theorem]{Lemma}
\newtheorem{proposition}{Proposition}[section]

\newtheorem{definition}{Definition}[section]
\newtheorem{remark}{Remark}[section]

\definecolor{dkgreen}{rgb}{0,0.6,0}
\definecolor{codegreen}{rgb}{0,0.6,0}
\definecolor{codegray}{rgb}{0.5,0.5,0.5}
\definecolor{codepurple}{rgb}{0.58,0,0.82}
\definecolor{backcolour}{rgb}{0.95,0.95,0.92}
\definecolor{purple}{RGB}{128,0,128}
\definecolor{indigo}{RGB}{75,0,130}
\definecolor{royalblue}{RGB}{65,105,225}
\definecolor{navy}{RGB}{0,0,128}
\definecolor{codebrown}{rgb}{0.6,0.6,0}

\lstdefinestyle{PyStyle}{
  language=Python,
  basicstyle=\ttfamily\footnotesize, 
  aboveskip = 0.05in,
  belowskip = 0.05in,
  breaklines=true,
  float=tp,
  floatplacement=tbp,
  frame=none,
  numbers=none,
  keepspaces=true,
  captionpos=b,
  showstringspaces=false,
  emph={MyClass,__init__},          %
  stringstyle=\color{deepgreen},
  emphstyle=\ttb\color{deepred},    %
  keywordstyle=\color{blue},
  commentstyle=\color{codegreen},
  morekeywords={self,def, for, sum, in, and}
}

\newif\ifcommenton
\commentonfalse %

\newcommand{\sys}{\mbox{Inshrinkerator}\xspace}
\newcommand{\paperTitle}{\sys: Compressing Deep Learning Training Checkpoints via Dynamic Quantization
}
\newcommand{\ours}{\mbox{Ours}\xspace}
\newcommand{\kmeans}{\mbox{K-Means}\xspace}

\newcommand{\quantizer}{\textsc{Quantizer}\xspace}
\newcommand{\qcm}{\textsc{Quantization Config Manager}\xspace}
\newcommand{\deltaE}{\textsc{Delta Encoder}\xspace}
\newcommand{\CNR}{\texttt{Adaptive}\xspace}
\newcommand{\QD}{\texttt{Uniform}\xspace}
\newcommand{\GOBO}{\texttt{KMeans}\xspace}

\newcommand{\hide}[1]{}

\newcommand{\PPP}[1]{
\vspace{0.05in}
\noindent{\textit{\IfEndWith{#1}{.}{#1}{#1.}}}
}

\newcommand{\sigsep}{\sigma\text{-}separable}

\newcommand{\oM}{M^{*}}
\newcommand{\tM}{\tilde{M}}
\newcommand{\wM}{\widehat{M}}
\newcommand{\otM}{\tilde{M}^{*}}
\newcommand{\tmu}{\tilde{\mu}}
\newcommand{\mui}{\mu_{M^{*}}(i)}
\newcommand{\tmui}{\tilde{\mu}_{\otM}(i)}
\newcommand{\tX}{\tilde{X}}
\newcommand{\tx}{\tilde{x}}
\newcommand{\txi}{\tilde{x}_{i}}
\newcommand{\loss}{\mathit{L}}
\newcommand{\qloss}{\mathit{L}_q}
\newcommand{\phiq}{\phi_q}

\newcommand*\rot{\rotatebox{90}}
\newcolumntype{s}{>{\centering\arraybackslash}m{1.5cm}}
\newcolumntype{x}{>{\centering\arraybackslash}m{2cm}}
\newcolumntype{L}{>{\centering\arraybackslash}m{3cm}}
\newcolumntype{y}{>{\centering\arraybackslash}m{0.7cm}{\hspace{2px}}}

\newcommand{\greencheck}{\textcolor{codegreen}{\checkmark}}
\newcommand{\redcross}{\textcolor{red}{$\times$}}
\newcommand{\orangecross}{\textcolor{blue}{$\times$}}
\newcommand{\minihead}[1]{{\vspace{.45em}\noindent\textbf{#1.} }}

\ifcommenton

\newcommand{\amey}[1]{\textcolor{codegreen}{[Amey: #1]}}
\newcommand{\alexey}[1]{\textcolor{indigo}{[AT: #1]}}
\newcommand{\kexin}[1]{\textcolor{codebrown}{[Kexin: #1]}}
\newcommand{\satwik}[1]{\textcolor{navy}{[SB: #1]}}

\else
\newcommand{\alexey}[1]{}
\newcommand{\amey}[1]{}
\newcommand{\kexin}[1]{}
\newcommand{\satwik}[1]{}
\fi

\title{\paperTitle}

\copyrightyear{2024}
\acmYear{2024}
\setcopyright{rightsretained}
\acmConference[SoCC '24]{ACM Symposium on Cloud Computing}{November 20--22, 2024}{Redmond, WA, USA}
\acmBooktitle{ACM Symposium on Cloud Computing (SoCC '24), November 20--22, 2024, Redmond, WA, USA}
\acmDOI{10.1145/3698038.3698553}
\acmISBN{979-8-4007-1286-9/24/11}
\widowpenalty10000
\clubpenalty10000

\begin{document}

\author{Amey Agrawal}
\affiliation{%
  \institution{Georgia Institute of Technology}
}
\email{ameyagrawal@gatech.edu}

\author{Sameer Reddy}
\authornote{Work done while at Georgia Institute of Technology.}
\affiliation{%
  \institution{Cisco Inc.}
}
\email{sameredd@cisco.com}

\author{Satwik Bhattamishra}
\affiliation{%
  \institution{University of Oxford}
}
\email{satwik.bmishra@cs.ox.ac.uk}

\author{Venkata Prabhakara Sarath Nookala}
\affiliation{%
  \institution{Meta Inc.}
}
\authornotemark[1]
\email{sarathnookala@meta.com}

\author{Vidushi Vashishth}
\affiliation{%
  \institution{Google Inc.}
}
\authornotemark[1]
\email{vvashishth@google.com}

\author{Kexin Rong}
\affiliation{%
  \institution{Georgia Institute of Technology}
}
\email{krong@gatech.edu}

\author{Alexey Tumanov}
\affiliation{%
  \institution{Georgia Institute of Technology}
}
\email{atumanov@gatech.edu}

\begin{abstract}
The likelihood of encountering in-training failures rises substantially with larger Deep Learning (DL) training workloads, leading to lost work and resource wastage. Such failures are typically offset by checkpointing, which comes at the cost of storage and network bandwidth overhead. State-of-the-art approaches involve lossy model compression mechanisms, which induce a tradeoff between the resulting model quality and compression ratio.
We make a key enabling observation that the sensitivity of model weights to compression varies during training, and different weights benefit from different quantization levels, ranging from retaining full precision to pruning. 
We propose (1) a non-uniform quantization scheme that leverages this variation, 
(2) an efficient search mechanism that dynamically finds the best quantization configurations, and
(3) a quantization-aware delta compression mechanism that rearranges weights to minimize checkpoint differences and thereby improving compression.

We instantiate these contributions in \sys, an in-training checkpoint compression system for DL workloads. Our experiments show that \sys consistently achieves a better tradeoff between accuracy and compression ratio compared to prior works, enabling a compression ratio up to 39x and withstanding up to 10 restores with negligible accuracy impact in fault-tolerant training. \sys achieves at least an order of magnitude reduction in checkpoint size for failure recovery and transfer learning without any loss of accuracy.

\end{abstract}

\maketitle
\renewcommand{\shortauthors}{Amey Agrawal et al.}

\section{Introduction}
\label{sec:introduction}

Large-scale Deep Learning (DL) training workloads, increasingly require weeks or months of compute time on GPU clusters~\cite{opt2}. As the duration and scale of these training efforts grows, so does the frequency of failures. For example, the DL training workloads in a Microsoft cluster encounter a failure every 45 minutes on average (excluding early failures) due to various system and user errors~\cite{philly}. This makes the role of checkpointing--creating periodic snapshots of a DL model during its training--increasingly crucial. When failure occurs, training can be recovered from the most recent checkpoint to reduce lost work.

During the model development phase, developers often resume from a stable mid-training checkpoint instead of starting from scratch for each bug fix or modification~\cite{opt-logbook}. Checkpoints are also used in transfer learning, where a saved model state can be adapted for\alexey{to?} a different task. Due to these varied uses, there is a growing research interest in checkpointing systems~\cite{QD, lcCompression, checknrun, deltadnn}.

Frequent checkpoints ensure minimal loss of progress in the event of failures but also increase the storage cost. For instance, checkpoints for language models like Pythia \cite{pythia}, OPT \cite{opt2} require tens of gigabytes of storage for each checkpoint. During the course of training, hundreds of such checkpoints are generated. As models grow in complexity and size, and as organizations seek to maintain a multitude of checkpoints, compressing these checkpoints becomes necessary.

\begin{figure}
    \centering
    \includegraphics[width=0.9\linewidth]{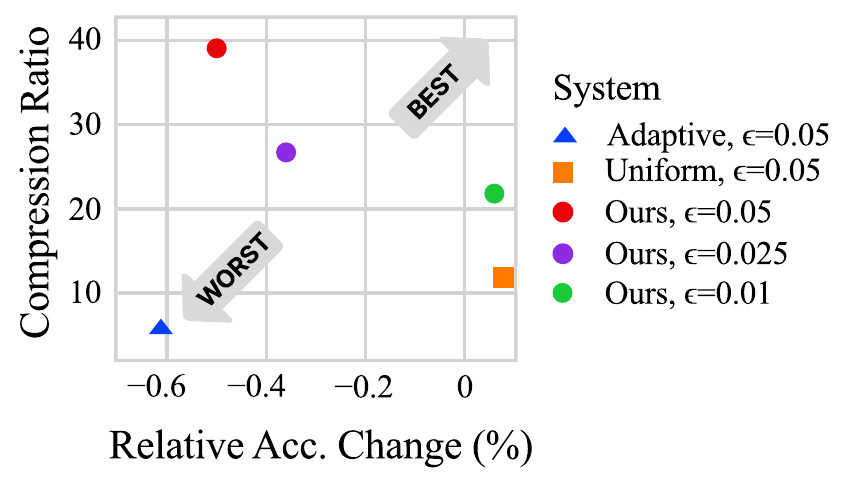}
    \caption{
        \small \sys provides better accuracy-storage tradeoffs compared to baselines for ResNet152 training.   }
    \label{fig:fr-tradeoff-single}
\end{figure}

As a result, several model compression systems have been proposed in the recent past \cite{QD,lcCompression,checknrun,deltadnn}, however, existing checkpoint compression systems face shortcomings in terms of suboptimal trade-off between compression ratio and accuracy degradation as shown in \cref{fig:fr-tradeoff-single}. We find that this issue originates due to the following factors. First, model parameters exhibit different levels of sensitivity to compression. Systems such as Check-N-Run \cite{checknrun} and QD-Compressor \cite{QD} adopt uniform quantization strategies that provide all parameters with the same level of resolution -- oblivious to the level of their impact on the model quality. 
Second, it's observed that models, as they accrue knowledge over the course of training, exhibit higher quantization error\alexey{either we observe or cite who observes}. However, existing systems adhere to fixed quantization configurations, e.g., a preset number of quantization bins, throughout training.

\begin{figure*}[t!]
    \centering
    \includegraphics[width=\textwidth]{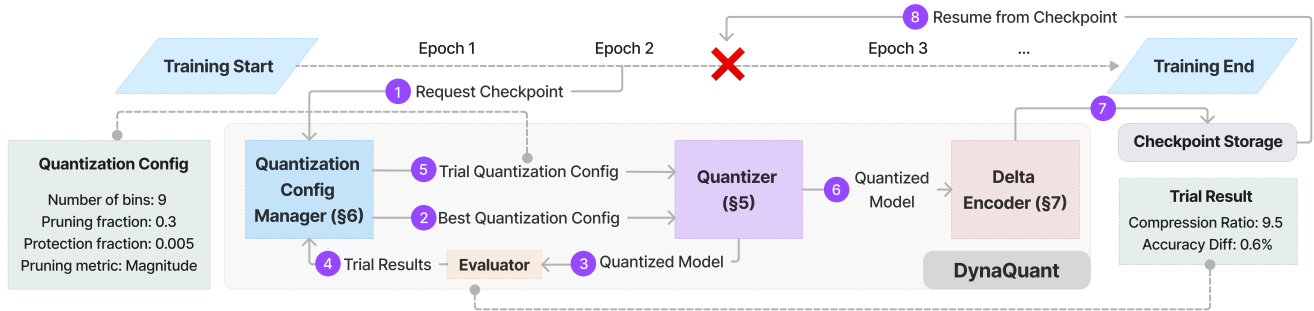}
    \caption{
        \small High-level workflow of a checkpoint compression and saving request, and resumption of training on failure from a saved compressed checkpoint in \sys.}
    \label{fig:hld}    
\end{figure*}

To overcome these limitations, we introduce \sys---an efficient, transparent in-training model checkpoint compression system for DL workloads (\autoref{fig:hld}). \sys is premised on the observation that not all model parameters contribute equally to model quality, and this contribution varies as the model trains\alexey{varies over the course of model training}. This observation informs three main aspects of \sys.

First, \sys uses a comprehensive quantization configuration space that partitions model parameters into three groups based on their significance: parameters preserved with full precision (unchanged), pruned parameters (set to zero), and quantized parameters. For the quantized parameters, \sys proposes a novel \textit{non-uniform} quantization approach using a sketch-based approximate K-Means clustering, achieving up to 65$
\times$ speed up compared to a na\"ive implementation. This enables more effective compression by reducing the number of required quantization bins by 2-3$\times$ compared to uniform quantization.

Second, \sys includes an efficient dynamic quantization configuration search component. This component automatically adapts the quantization configuration as parameters' sensitivity towards\alexey{to} compression changes over training time. This has the combined benefit of improving compression performance and alleviating the cognitive burden on ML practitioners to manually choose these configurations.

Finally, \sys implements a novel quantization-aware delta encoding scheme, based on the observation that most parameters remain in the same quantization bin across consecutive checkpoints.
Our delta encoding scheme rearranges the model parameters such that parameters for each quantization bucket are stored separately, improving delta encoding efficiency with compression ratios 3-4$\times$ higher than existing delta compression schemes.

\sys builds on the following key contributions:

\begin{itemize}
\item A novel \textit{non-uniform} quantization algorithm using approximate K-Means clustering. 
\item A mechanism for efficient, automatic quantization configuration search space navigation during training.
\item A delta compression algorithm with effective run length encoding enabled by quantization-aware model parameter rearrangement.

\end{itemize}

We evaluate \sys on 7 different model families, including tasks in vision and language modeling.
\sys reduces checkpoint size by 26-39$\times$ for fault-tolerant training and sustains up to ten failures (for multi-day training jobs) with negligible impact on the final accuracy of the trained model, achieving a 1.3-3.3$\times$ improvement over state-of-the-art methods. \sys can also reduce the storage overhead of snapshots of pre-trained models used for transfer learning by 10$\times$ with no impact on the performance of the fine-tuned model. Overall, \sys can achieve at least an order of magnitude checkpoint overhead reduction on both use cases with minimal accuracy loss. 

\section{Background and Use Cases}
\label{sec:background}

We start by introducing the notion of checkpointing in deep learning systems and highlight a variety of scenarios where model checkpoint compression is beneficial. Model checkpoints serve as snapshots of a deep learning model at a certain stage in its training process. They encompass the model's architecture, learned parameters, and optimizer state. One can think of checkpoints as analogous to code commits in version control systems, where each checkpoint represents a version of the model at a specific stage of its development. In the rest of this section, we describe several use cases for model checkpoint compression.

\minihead{Failure Recovery In Training Workloads} 
Large-scale deep-learning training workloads are highly susceptible to various hardware and software failures. 
Any failure during this period prompts recovery from the most recent checkpoint. 
Joen et al. \cite{philly} show that on average DL training workloads in a Microsoft cluster encounter a failure every 45 minutes (excluding early failures) due to various system and user errors. Zhang et al. \cite{opt2} also observe that the training of the OPT-2 
model required 100+ restarts due to failures across the span
of two months. As we move toward developing not only larger but also more complex models and distributed training systems, the likelihood of experiencing system failures is expected to rise. Simultaneously, constraints on bandwidth and storage capacity, especially in shared multi-tenant environments, limit the frequency of checkpoints. This creates tension between frequent checkpointing to minimize wasted work and managing the storage and network checkpointing overheads. Addressing this challenge calls for a compression mechanism that enables more frequent checkpoints with minimal overhead.

\minihead{Iterative Model Development} In the development of deep learning models, checkpoints  are useful for multiple use cases including version control and continual learning. DL model development is an iterative process with bug fixes, hyperparameter tuning, and architectural adjustments. Given the substantial GPU hours invested in training, it is wasteful to discard progress just to make these adjustments. As a result, practitioners make these changes mid-way through training and resume the training from a stable checkpoint~\cite{opt-logbook}. This prompts the need for an effective git-like version control system for models, to manage their development cycle. Beyond the training phase, production environments often require continuous updates to models to accommodate new data and maintain performance. In such scenarios, model checkpoints are preserved for extended periods for reasons such as debugging, reliability, and provenance. Delta-encoded checkpoints, where only the changes between successive checkpoints are stored, can help minimize storage requirements and streamline recovery processes in these use cases.

\minihead{Model Hubs \& Transfer Learning} Model hubs, such as Hugging Face~\cite{huggingface}, provide pre-trained models for a variety of tasks. These pre-trained models can be used for transfer learning, a  process in which a pre-trained model is fine-tuned for a new task or domain. With the growth of the number of models on such hubs, the bandwidth required to transfer model checkpoints becomes a significant concern. For example, the BERT Base \cite{bert} model alone was downloaded more than 41 million times from the HuggingFace model hub \cite{hf-bert} in a 30-day window as of August 2023. Model checkpoint compression can reduce the bandwidth required for transferring models between users and model hubs.

\section{Overview}
\label{sec:background}

This section outlines the design goals and key components of \sys.

\begin{table}[t]
    \centering
    \begin{tabular}{@{\hspace{4px}}l*{6}{@{\hspace{4px}}c}@{\hspace{4px}}}
    \toprule
     & \rot{CNR~[\citenum{checknrun}]} & \rot{QD~[\citenum{QD}]} & \rot{LC~[\citenum{lcCompression}]} & \rot{DDNN~[\citenum{deltadnn}]} & \rot{GOBO~[\citenum{gobo}]} & \rot{\sys} \\
    \midrule
    In-training & \greencheck & \greencheck & \greencheck & \greencheck & \redcross & \greencheck \\
    Low Overhead & \greencheck & \greencheck & \greencheck & \greencheck & \redcross & \greencheck \\
    Model Agnostic & \redcross & \greencheck & \greencheck & \greencheck & \greencheck & \greencheck \\
    Scalable & \greencheck & \greencheck & \greencheck & \greencheck & \redcross & \greencheck \\
    Algorithmically Transparent & \greencheck & \redcross & \greencheck & \greencheck & \greencheck & \greencheck \\
    \midrule
    Dynamic Configuration Management & \redcross & \redcross & \redcross & \greencheck & \redcross & \greencheck \\
    Non-uniform Quantization & \redcross & \redcross & \greencheck & \redcross & \greencheck & \greencheck \\
    Quantization-aware Delta Encoding & \redcross & \greencheck & \greencheck & \greencheck & \redcross & \greencheck \\
    \bottomrule
    \end{tabular}
    \vspace{0.5em}
    \caption{
        \small Comparison of checkpoint compression systems.
    }
    \label{tab:system-featurs}
    \end{table}

\minihead{Design Goals}
\sys aims to preserve model accuracy under multiple restores from compressed checkpoints while minimizing storage overhead. Additional design goals include:

\begin{itemize}
\item Low Overhead: Compression should impose minimal overhead ($<0.5$\%) on training runtime.
\item Model Agnostic: The quantization algorithm should handle diverse model architectures without requiring model-specific adjustments.
\item Scalable: Ability to process models with hundreds of millions to billions of parameters efficiently.
\item Algorithmically Transparent: No interference with the original training algorithm, unlike quantization-aware training methods.
\end{itemize}

\autoref{tab:system-featurs} compares the main features of \sys with existing checkpoint compression systems. A detailed discussion is provided in \S~\ref{sec:related}.

\minihead{System Overview}
\sys processes checkpoint requests in three stages, as illustrated in \autoref{fig:hld}. First, the \qcm performs a distributed search to identify an optimal quantization configuration. Next, the \quantizer compresses the model using the identified configuration. Finally, the \deltaE processes the compressed checkpoint in CPU memory.

\noindent{\textbf{\qcm} (\S~\ref{sec:search}):} 
The quantization configuration (\autoref{fig:hld}, \autoref{tab:dq-search-space}) summarizes all quantization related parameters. Over the course of training, \sys dynamically updates the quantization configuration throughout training to maximize the compression while satisfying to user-defined  quality threshold ($\epsilon$) for maximum allowed quantization error per checkpoint. An efficient search algorithm identifies a configuration maximizing compression ratio while adhering to $\epsilon$.

\noindent{\textbf{\quantizer} (\S~\ref{sec:quantization}):} 
Given a specific quantization configuration, \sys divides all the model parameters into three categories accordingly: 
prune (set to zero if below the pruning threshold), protect (preserve in full precision if above the protection threshold), or quantize (apply non-uniform quantization).
\sys uses a novel efficient non-uniform quantization scheme using sensitivity-aware approximate K-Means clustering, where quantization granularity is determined by the number of bins used in K-Means (e.g., 8-bins=3-bits). This approach offers superior quantized models with reduced runtime overhead compared to traditional methods and allows for any number of quantization bins.

\noindent{\textbf{\deltaE} (\S~\ref{sec:delta}):} 
\sys performs lossless compression using quantization-aware delta compression, exploiting similarities between consecutive checkpoints. \deltaE employs a novel parameter rearrangement technique enabling efficient run-length encoding, reducing storage overhead by up to two orders of magnitude.
After delta-compression and run-length encoding, model parameters are encoded in a combined byte stream format.

\begin{figure*}[ht!]
    \centering
    \includegraphics[width=\textwidth]{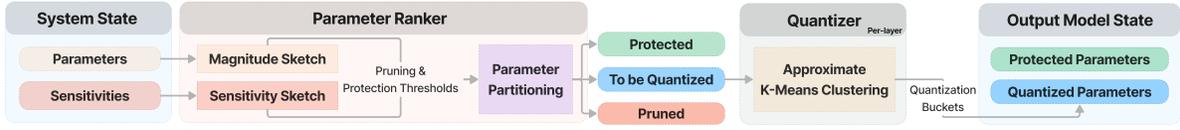}
    \caption{
    \small 
        A high-level illustration of the quantization process in \sys.
    }
    \label{fig:quantizer}    
\end{figure*}

\section{\quantizer}
\label{sec:quantization}

This section outlines the design and implementation of the \quantizer. An overview of \quantizer's workflow is shown in \autoref{fig:quantizer}. First, the \quantizer ranks model parameters according to their magnitude and sensitivity (\S~\ref{sec:ranking}). Based on the ranking, the parameters are then partitioned into three groups (\S~\ref{sec:pruneandprotect}). The least important parameters are pruned, the most important ones are protected with high precision, while the remaining parameters are quantized using our novel non-uniform quantization approach (\S~\ref{sec:nonuniform}).

\subsection{Metrics for Parameter Ranking}
\label{sec:ranking}
First, we need to rank parameters by importance in order to assign them for protection, pruning, or quantization.

The two popular metrics used to determine the importance of parameters are magnitude and sensitivity. Magnitude-based ranking approaches simply use the parameter magnitude as an importance score~\cite{han2015learning}. The magnitude score $I_m(w)$ for parameter $w$ is defined as $I_m(w) = |w|$. Sensitivity-based ranking approaches attempt to estimate the impact on the end-to-end loss value when a parameter is altered \cite{lecun1989optimal,dong2019hawq}. In this paper, we use the first-order Taylor series approximation formulation of sensitivity for the sake of computational efficiency. 
The sensitivity score is defined as $I_s(w) = |\nabla\mathcal{L}(w) w|$, where $\mathcal{L}$ represents the loss function and $\nabla\mathcal{L}(w)$ is the gradient of weight $w$ with respect to the loss function. It has been shown that magnitude-based parameter ranking approaches perform poorly in the early stage of training~\cite{when-to-prune}. On the other hand, as we approach convergence, gradients start to diminish, making the sensitivity metric less reliable.

To address the above limitations, we propose to use a combination of sensitivity and magnitude to get a reliable ranking throughout the training process. \cref{fig:param-dist} shows the distribution of parameters across the two ranking dimensions for a ResNet152 model trained on Imagenet data. While there is a strong correlation between magnitude and sensitivity, we observe a significant number of outliers, which have a high magnitude score but a low sensitivity score or vice versa. By ranking parameters across both the metrics, \sys can identify and preserve these parameters.

\begin{figure}
    \centering
    \begin{subfigure}{1\linewidth}
    \begin{subfigure}{0.49\linewidth}
        \centering
        \includegraphics[width=0.95\linewidth]{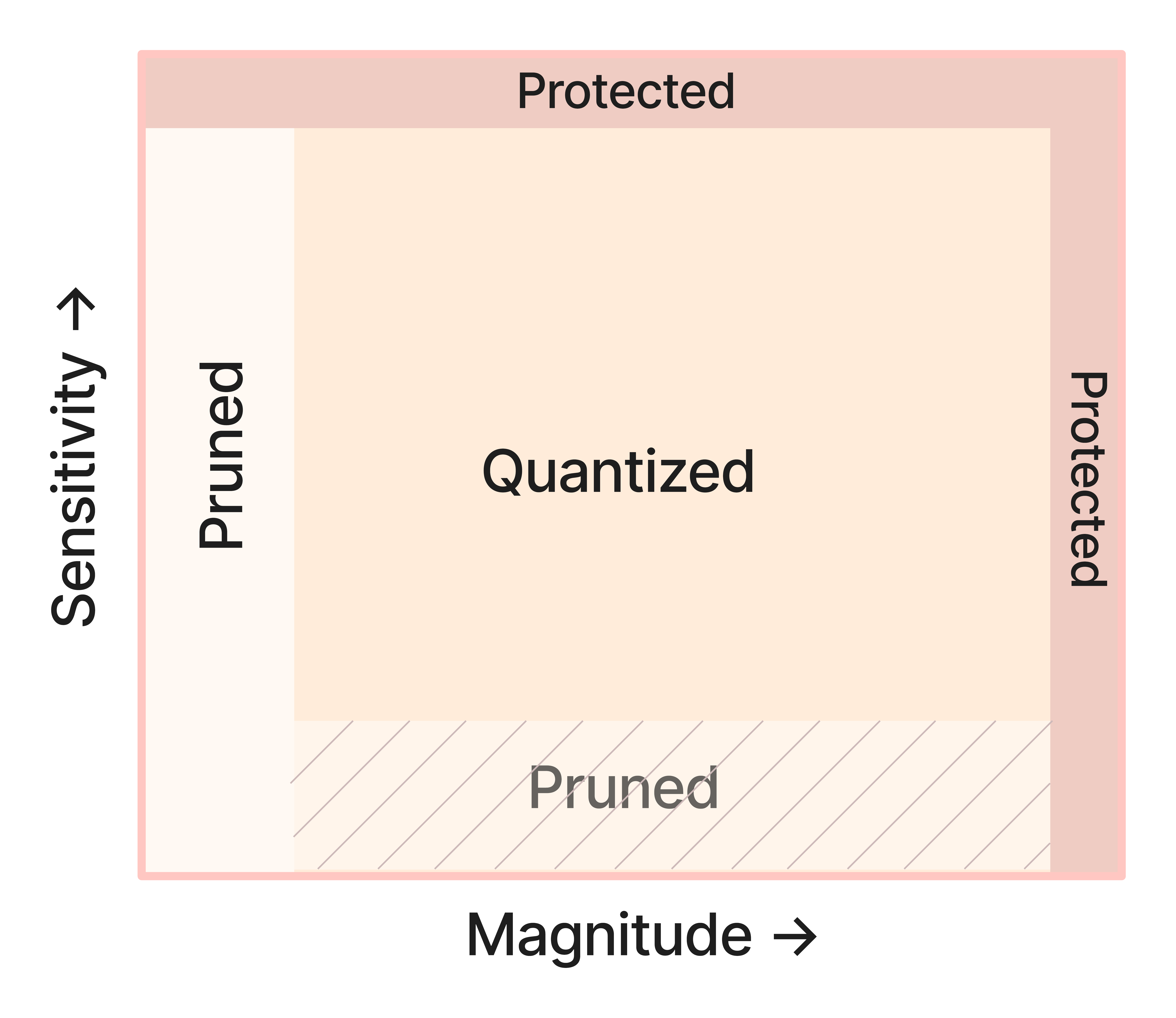}
        \caption{\small Parameter partitioning schema in \sys.}
        \label{fig:param-grid}
    \end{subfigure}
    \hfill
    \begin{subfigure}{0.49\linewidth}
        \centering
        \includegraphics[width=0.9\linewidth]{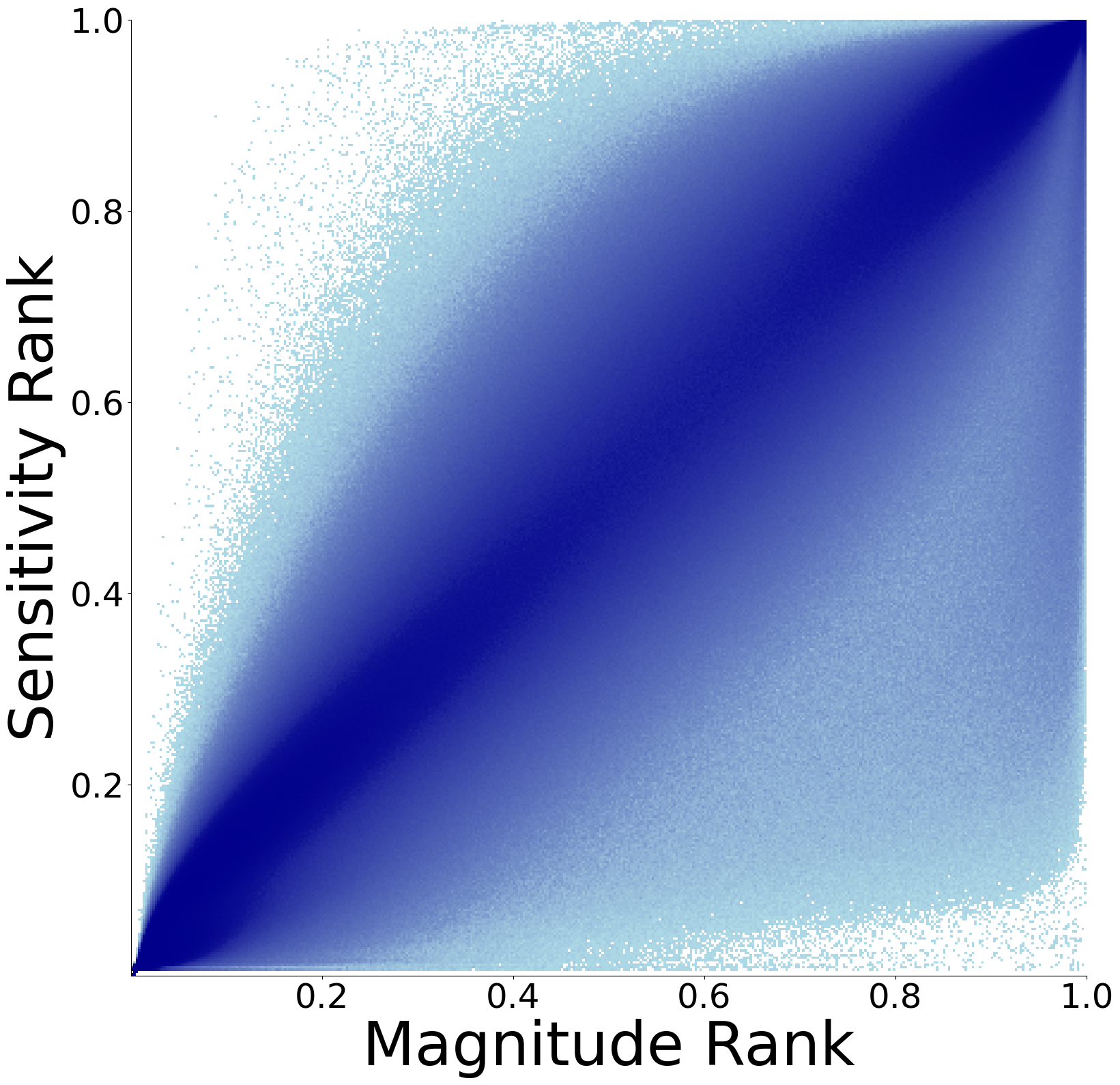}
        \caption{\small{Parameter distribution for ResNet152.}}
        \label{fig:param-dist}
    \end{subfigure}
    \end{subfigure}
    \caption{\small Model parameters are partitioned into three groups for pruning, protection, and quantization by jointly evaluating magnitude and sensitivity scores.}
    \vspace{-1em}
\end{figure}

For efficient sensitivity computation, we reuse gradients computed during the training process for sensitivity estimates and asynchronously copy the gradients to pinned CPU memory to avoid GPU memory overhead. We discuss implementation details in \cref{app:sensitivity-computation}.

\subsection{Parameter Partitioning}
\label{sec:pruneandprotect}

Given the magnitude and sensitivity score for each model parameter, the \quantizer partitions the parameters into one of the three groups (\cref{fig:param-grid}). 

\noindent\textbf{Protection.}
\sys preserves the most important model parameters in the high precision \texttt{bfloat16} format.
Recent quantization studies~\cite{llm-int8, dash2020hessian} show a significant reduction in quantization error when a few important weights are protected. Our experiments suggest that protecting a small fraction (0.05-0.1\%) of both the highest magnitude and sensitivity parameters provides the best performance. The precise protection fraction is determined dynamically by the \qcm. Based on the fraction ($F_{\text{prot}}$) of parameters to be protected, we identify the minimum magnitude and sensitivity thresholds (quantiles) for a parameter to qualify for protection.

\noindent\textbf{Pruning.} We prune the least significant model parameters, based on the pruning fraction $F_{\text{prun}}$ (typically between 10-40\%) and the pruning metric, magnitude or sensitivity. An important design consideration is the granularity level at which the pruning is performed. Layer-level pruning can degrade model quality due to different redundancy levels across the network layers.  Conversely, global pruning may disproportionately affect different layer types (e.g., Convolution, Attention, Linear) due to varied weight distributions. To tackle this, we perform {\em per-layer type} pruning, i.e. using the same pruning fraction across all layers of a given type. We observed that in GPT-2 Medium, we can eliminate up to 30\% parameters with a minimal effect on the model's quality.

\subsection{Efficient Parameter Partitioning via Quantile Sketches} 
\label{app:quantile}

Computing quantiles required for determining pruning and protection thresholds can be costly, especially for large models due to the log-linear time complexity of the operation. Moreover, in cases where models are trained via model parallelism, and parameters are distributed across different worker groups, computing global quantiles can be quite challenging. %

To facilitate efficient partitioning of model parameters, we use a sketch-based quantile estimation algorithm that is a simplified and parallelized version of DDSketch~\cite{ddsketch}. This enables us to compute approximate quantile estimates in linear time with a configurable $\alpha$-relative-error bound, that is, the sketch produces an estimated quantile value corresponding to the quantile value $x_q$ such that $|\tilde{x_q} - x_q| \leq \alpha x_q$. For computing protection and pruning thresholds, that require tail quantiles, DDSketch's relative error bounds outperform alternative quantile sketches that guarantee rank error. Furthermore, the quantile sketches are mergeable, which simplifies the computation of global quantile estimates from individual model shards.

The sketching algorithm transforms the input values into a logarithmic space and then divides them into an equal-width histogram (\cref{fig:log-sketch-transform}). Due to the use of a logarithmic scale, we achieve varying levels of detail for different ranges of the original values. Specifically, smaller original values will land in narrower bins, while larger values will fall into broader bins. This allows the sketch to provide the $\alpha$-relative error property of the sketch. To produce quantile estimates, the sketch sums up the buckets until it finds the bucket containing the desired quantile value.

Algorithm \ref{algo:quantile-sketch} describes the parallelizable version of DDSketch \cite{ddsketch} optimized for GPU execution. We forgo the bucket merging step in the original DDSketch for ease of parallelization. This has limited impact in practice, as we observe that the number of buckets does not exceed a few thousand even for models with hundreds of millions of parameters with a strict relative error bound of 1\%. We implement a highly parallelized version of this algorithm using Pytorch that can run on GPUs. We observe a 3-4$\times$ speed-up compared to off-the-shelf GPU-enabled implementation provided by CuPy \cite{cupy} while using significantly less memory.

\begin{figure}
    \centering
    \includegraphics[width=\linewidth]{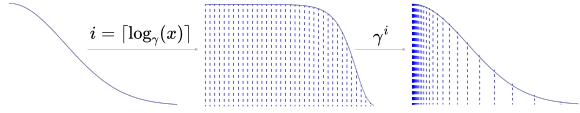}
    \vspace{-1em}
    \caption{
        \small{
        {DDSketch's log space transformation creates a histogram with value-dependent bucket width.}
        }
    }
    \label{fig:log-sketch-transform}
\end{figure}

\begin{algorithm}
\caption{GetQuantileSketchHistogram}
\begin{algorithmic}[1]
\REQUIRE $X$ \COMMENT{Input list}
\REQUIRE $\alpha$ \COMMENT{Relative Error Bound}
\STATE $\gamma \leftarrow (1 + \alpha) / (1 - \alpha)$ \\
\STATE Define $X_q$ as an empty array \\
\FORALL{$x$ in $X$}
    \STATE Append $\lceil \log_{\gamma}(x) \rceil$ to $X_q$
\ENDFOR
\STATE $(\texttt{bucket\_vals}, \texttt{bucket\_counts}) \leftarrow \texttt{unique}(X_q)$
\RETURN $\texttt{bucket\_vals}, \texttt{bucket\_counts}$
\end{algorithmic}\label{algo:quantile-sketch}
\end{algorithm}

\subsection{Non-uniform Quantization via Approximate K-Means Clustering}
\label{sec:nonuniform}
The remaining model parameters that are not pruned or protected go through quantization. 
Quantization reduces the precision of model parameters to lower-bit representations (e.g., from 32-bit floating-point to 8-bit integers), leveraging redundancy and noise tolerance in large networks. Uniform quantization strategies create equally spaced quantization levels across the parameter space, while non-uniform quantization approaches use varying intervals between levels to adapt to the parameter distribution. 

Non-uniform quantization techniques, such as via \kmeans clustering~\cite{checknrun, gobo, deep-compression}, generally provide better compression and model quality. However, they have seen limited adoption in checkpoint compression systems due to their computational complexity. For instance, GOBO~\cite{gobo}, that utilizes a variant of \kmeans clustering, takes roughly 50 minutes to quantize a small model like ResNet-18 (11.7M parameters).

\begin{algorithm}
\caption{Approximate K-Means Clustering}
\begin{algorithmic}[1]
\REQUIRE $X$ \COMMENT{Input data points}, $k$ \COMMENT{Number of clusters}
\REQUIRE $\alpha$ \COMMENT{Relative error}, $\sigma$ \COMMENT{Linear combination factor}
\STATE $X_q$, $C_q$ $\leftarrow$ \texttt{GetQuantileSketchHistogram}($X$, $\alpha$) \COMMENT{Obtain log sketch histogram \cref{algo:quantile-sketch} for coarse grain clustering. $X_q$ are the histogram bucket centers and $C_q$ are the corresponding frequencies.}
\STATE  $\hat{X_q}$, $\hat{C_q}$ $\leftarrow$  \texttt{Normalize}($X_q$), \texttt{Normalize}($C_q$)
\STATE $W = \sigma * \hat{C_q} + (1 - \sigma) * |\hat{X_q}|$  \COMMENT{Compute sample weight for histogram buckets according to frequency and magnitude} 
\STATE $\texttt{centers}$ $\leftarrow$ \texttt{WeightedKMeans}($X_q$, $W$, $k$)  \COMMENT{Perform weighted k-means++ on histogram keys and values}  
\RETURN $\texttt{centers}$
\end{algorithmic}\label{algo:kmeans}
\end{algorithm}

\minihead{Approximate \kmeans Clustering}
We introduce a novel non-uniform quantization method leveraging approximate K-means clustering, which strikes a balance between quantization quality and computational efficiency. Our key insight is to perform K-means clustering on histogram buckets of the parameters, rather than on the raw parameters themselves. This significantly reduces the computational burden of K-means clustering, resulting in up to 65$\times$ speedup compared to the standard implementation in CuML (\S~\ref{sec:ablations}).

\cref{algo:kmeans} describes the two-step quantization algorithm. First, we group model parameters into coarse-grain buckets, utilizing the same log-space projection approach in DDSketch. We then perform weighted k-means clustering, where the histogram bins serve as the data and their heights as sample weights, to establish quantization boundaries. Two key optimizations enhance this clustering performance:

\begin{figure}
    \centering
    \includegraphics[width=\linewidth]{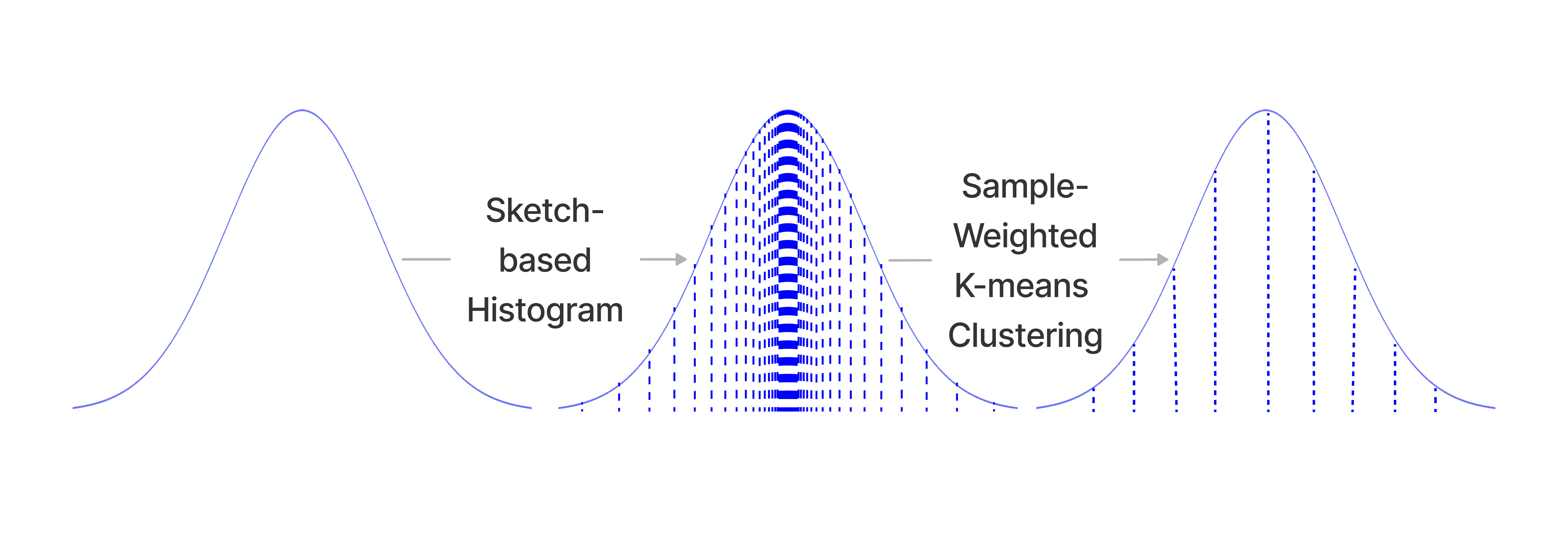}
        \vspace{-1em}
    \caption{
        \small{
        {Two-step quantization: 
        (1) group model parameters into coarse-grained buckets with sketch-based histogram computation. (2) cluster parameter buckets using sample weighted K-means clustering.}
        }}
    \label{fig:two-step-quantization}
\end{figure}

\begin{algorithm}
\caption{Weighted K-Means++ Initialization}
\begin{algorithmic}[1]
\REQUIRE $X$ \COMMENT{Input data points}
\REQUIRE $W$ \COMMENT{Input sample weights}
\REQUIRE $k$ \COMMENT{Number of clusters}
\STATE $C \leftarrow X[ \texttt{random\_choice}(W)]$ \COMMENT{Initialize the first centroid randomly according to weights}
\FOR{$i$ in $2$ to $k$}
    \STATE $D \leftarrow \texttt{min\_distance}(X, C)$ \COMMENT{Compute distance of each point to nearest centroid}
    \STATE $D \leftarrow D \cdot W$ \COMMENT{Weight the distances}
    \STATE $P \leftarrow D / \texttt{sum}(D)$ \COMMENT{Compute probabilities}
    \STATE $C \leftarrow C \cup X[ \texttt{random\_choice}(P)]$ \COMMENT{Select new centroid}
\ENDFOR
\RETURN $C$
\end{algorithmic}\label{algo:initialize}
\end{algorithm}

First, we introduce an additional parameter, $\sigma$, to our weighted \kmeans to help balance the allocation of resolution between high-importance parameters with low frequency and high-frequency parameters with low importance. Usually, non-uniform clustering offers increased resolution to denser parameter spaces. However, for neural network parameters, this would allocate greater resolution to values near zero. To address this, we calculate the sample weight of a bucket, $b^{i}$, as a linear combination of the bucket's normalized frequency and magnitude: $w^{i} = \sigma * b^{i}_{\text{freq}} + (1 - \sigma) * b^{i}_{\text{mag}}$ (see line 2 in \cref{algo:kmeans}). Our observations suggest that values of $\sigma$ in the range $0.1-0.4$ generally provide the best compression results. We use $\sigma=0.2$ for all experiments.

Second, the success of the \kmeans clustering algorithm relies heavily on proper initialization. Inappropriate initialization can lead to suboptimal solutions and slower convergence. K-means++~\cite{kmeanspp} addresses this by ensuring the initial centroids are uniformly distributed across the space, often leading to near-optimal solutions with a single round of clustering. We extend the K-means++ \cite{kmeanspp} algorithm for initialization with support for sample weights, \Cref{algo:initialize} provides a sketch of this process.

\minihead{Error Analysis} In the first step of the quantization algorithm, we create a histogram using DDSketch's log-space projection approach, which provides a configurable $\alpha$-relative error bound. Using this approach, we formally show that the difference between optimal loss of \kmeans clustering over raw inputs $X$ and its histogram counterpart $\tX$ is bounded by $\alpha$. We can further show that when the clustering is performed with sample-weighted k-means++, then the expected loss over $\tX$ is bounded by the optimal loss over $X$ and an additional term related to $\alpha^2$ (\cref{app:kmeans}).

\subsection{Efficient Sensitivity Computation} 
\label{app:sensitivity-computation}
To save computational resources, we reuse gradients computed during the training process for sensitivity estimates instead of calculating them separately. To avoid any GPU memory overhead, we asynchronously copy the gradients to pinned CPU memory as soon as the backward pass is completed. The gradient copy operation is overlapped with the optimizer step, data loading, and forward pass of the next batch, making the overhead of this operation negligible as shown in \cref{fig:gradient-copy-timeline}. The copied gradient values are accumulated on the CPU using an exponential moving average $\text{EMA}^{t}_{g_w} = \beta * g^{t}_w + (1 - \beta) * \text{EMA}^{t - 1}_{g_w}$, where $\text{EMA}^{t}_{g_w}$ is the exponential moving average of the gradient $g_w$ at timestep $t$. We set the value of $\beta$ to be $0.9$ in all our experiments to emphasize the recent gradients. The gradient copy mechanism is only invoked for a limited preconfigured number of batches before checkpointing is scheduled to be performed to further avoid any impact on steady-state throughput. We identify that typically recording gradients for fifty batches is sufficient to get good sensitivity estimates. 

\begin{figure}[h]
    \centering
    \includegraphics[width=\linewidth]{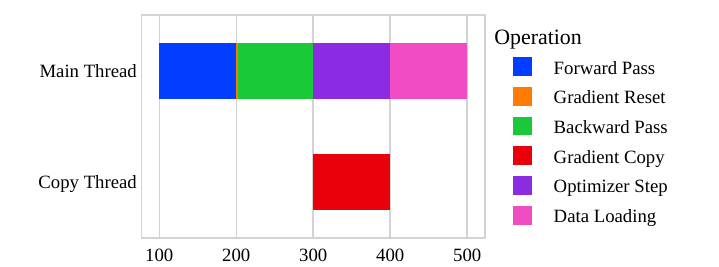}
    \caption{\small{{Schematic representation of asynchronous gradient offloading for efficient sensitivity computation in \sys.}}}
    \label{fig:gradient-copy-timeline} 
\end{figure}

\section{Quantization Config Manager}
\label{sec:search}

In this section, we describe \qcm's search space of quantization configurations and its efficient search algorithm. Unlike checkpoint compression systems that use a predefined quantization configuration throughout training, the manager dynamically searches for a configuration that maximizes compression ratio while satisfying the quality constraint $\epsilon$ at every checkpoint.

\noindent\textbf{Search Space.} \autoref{tab:dq-search-space} describes the quantization configurations that comprise the search space of \qcm. To keep the search space manageable, we choose to use the same number of quantization bins across all layers except embedding layers. We use separate quantization parameters for embedding layers, since we have observed that embedding layers are considerably more sensitive to quantization, in line with findings from other research on Transformer model quantization \cite{gobo, Qbert}.

\begin{table}[h]
\centering
\begin{tabular}{@{\hspace{0px}} l l @{\hspace{0px}}}\\
\toprule
Parameter & Values \\
\midrule
Number of bins & 4, 6, 8, 12, 16, 32 \\
Pruning Fraction & 0, 0.1, 0.2, 0.3, 0.4, 0.5 \\
Pruning Metric & Magnitude, Sensitivity \\
Protection Fraction & 0.0005, 0.005, 0.01 \\
\bottomrule
\end{tabular}
\vspace{0.5em}
\caption{\small \sys's quantization configuration space.}
\label{tab:dq-search-space}
\end{table}

\noindent\textbf{Overview of the Search Algorithm.} Na\"ively evaluating each configuration in the search space is costly -- the search space contains over 200 configurations for non-embedding layers alone. To speed up the search, our key insight is that the optimal quantization configuration remains largely similar between adjacent checkpoints. At the first checkpoint, we perform a guided exhaustive search to identify the optimal configuration. Subsequently, we greedily evaluate the configurations that are close to the previous best configuration and only fall back to the exhaustive search if we can not identify a configuration that satisfies the quality constraint.  We observe that we need to resort to exhaustive search at most 2-3 times during model training.

\begin{algorithm}[t]
\small
\caption{Guided Exhaustive Search}
\begin{algorithmic}[1]
\REQUIRE Quality threshold $T$, Configuration cube $C_{\text{cube}}$, Compression Ratio $CR_{\text{max}}$
\ENSURE Optimal configuration $C_{\text{opt}}$

\STATE $C_{\text{opt}} \gets \text{Null}$
\STATE $CR_{\text{max}} \gets 0$

\STATE $Diagonal \gets \text{GetDiagonal}(C_{\text{cube}})$
\STATE $C_{\text{opt}}, CR_{\text{max}} \gets \text{ParallelSearch}(Diagonal, T)$
\STATE $SubCubes \gets \text{GetFeasibleSubCubes}(C_{\text{cube}}, C_{\text{opt}})$
\FOR {each $SubCube$ in $SubCubes$}
\STATE $C_{\text{temp}}, CR_{\text{temp}} \gets \text{GuidedExhaustiveSearch}(T, SubCube)$
\IF{$CR_{\text{temp}} > CR_{\text{max}}$}
\STATE $C_{\text{opt}} \gets C_{\text{temp}}$
\STATE $CR_{\text{max}} \gets CR_{\text{temp}}$
\ENDIF
\ENDFOR
\RETURN $C_{\text{opt}}$
\end{algorithmic}\label{algo:guided-search}
\end{algorithm}

\noindent\textbf{Guided Exhaustive Search.} We use domain knowledge to reduce the cost of exhaustive search. If there was only one parameter, like the number of quantization bins, model quality would increase monotonically with the number of bins, while the compression ratio decreases. In this setting, we can find the optimal number of bins using a variation of binary search. We apply this approach to multiple configuration parameters, organizing them in a `configuration cube' where each axis represents a configuration knob as shown in \cref{fig:config-cube}. This layout ensures a monotonic increase in model quality across each axis, allowing us to use a divide-and-conquer method (\cref{algo:guided-search}). We repeat this search procedure with the two pruning metrics, magnitude and sensitivity, and select the better of the two.

\begin{figure}
    \centering
    \includegraphics[width=0.75\linewidth]{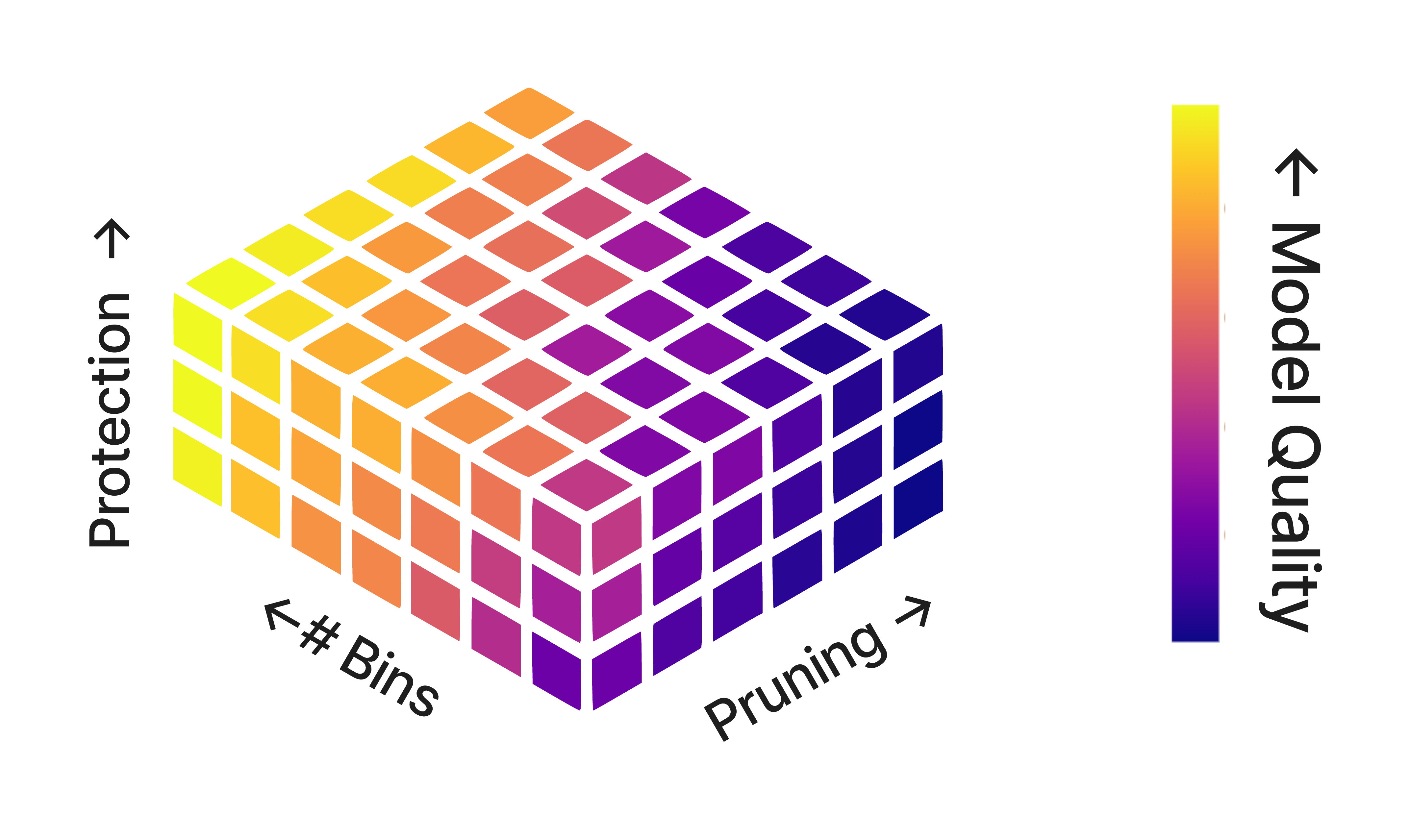}
    \caption{\small 
        Configuration search space as a function of pruning fraction, protection fraction, and number of bins. The color gradient indicates the direction of higher quality.%
    }
    \vspace{-1em}
    \label{fig:config-cube}
\end{figure}

\noindent\textbf{Delta-Neighbourhood Search.} In the steady-state, we obtain a suitable configuration by greedily selecting the best configuration within the $e$-delta neighborhood of the configuration chosen at the previous checkpoint. The $e$-delta neighborhood is defined as all configurations that are at most $e$ steps away in the configuration cube from the given configuration. At the start of each neighborhood search, we evaluate a configuration identical to the last best configuration, except with a different pruning metric. If we observe a significant improvement in the quantized model’s quality, we change the pruning metric and proceed with the neighborhood search using the new metric. As training progresses, the model becomes denser and more sensitive to compression. Hence, subsequent neighborhood searches intentionally rule out configurations more aggressive than the prior setting.

\noindent\textbf{Parallelized Implementation.} Since each data parallel replica maintains an identical model state, we can parallelize the search process. Configuration search trials are evaluated in groups of $m$, where $m$ is the degree of data parallelism. While scheduling the trials, we arrange them in a way that allows for an early exit. For instance, in the delta neighborhood search, we order the configurations in decreasing order of compression ratio. If in a given round of evaluation, we find a solution that satisfies the quality constraint, we can exit the search without evaluating the rest of the configurations. Our experiments show that for a data parallelism degree of eight, a high-quality configuration can be found within a single round of evaluation. %

\section{\deltaE}
\label{sec:delta}

To enhance the storage efficiency of compressed models, we introduce a \deltaE that stores the differences between successive checkpoints. Delta encoding works well with quantized models, tracking changes between quantization buckets rather than exact values.

\noindent\textbf{Calculating Delta.} To compute the delta between the adjacent checkpoints, we transform quantized values into integers and calculate the difference between the quantized integer values of the successive checkpoints for each layer.
We employ the technique proposed in QD-compressor~\cite{QD} that treats the quantization range as a cyclic buffer. Concretely, for integer-mapped quantized checkpoints $C^{i-1}_q$ and $C^i_q$, we compute the delta $D_i$ as $ D_i = (C^{i-1}_q - C^i_q) \text{ mod } B$, where $B$ is the number of quantization bins. %

Due to the dynamic changes in the system configuration, the number of quantization bins may vary between checkpoints. We address this by adjusting the size of the cyclic buffer to the maximum number of bins in either checkpoint. Additionally, pruned and protected values in our quantization scheme are treated as unique quantization levels with the same delta calculation. The protected values from checkpoint $C^i$ are stored separately to aid reconstruction.

\noindent\textbf{Encoding Delta.}
After delta calculation, we use a combination of run-length encoding \cite{rle} and Huffman \cite{huffman} encoding to reduce its storage footprint. Given that parameter changes are minimal in the later training stages, run-length encoding is particularly effective for compression. Standard run-length encoding can incur significant overhead due to the values with a run-length of one. We can reduce this overhead by only storing the run-lengths only when it is greater than one. However, this requires a mechanism to distinguish between the encoded values and their run-lengths. We achieve this by simply negating the sign of all the values, such that all the run-lengths are positive while all delta values are less than or equal to zero. Huffman encoding is then applied to the run-length encoded data for further compression.

\begin{figure}
    \centering
    \includegraphics[width=0.95\linewidth]{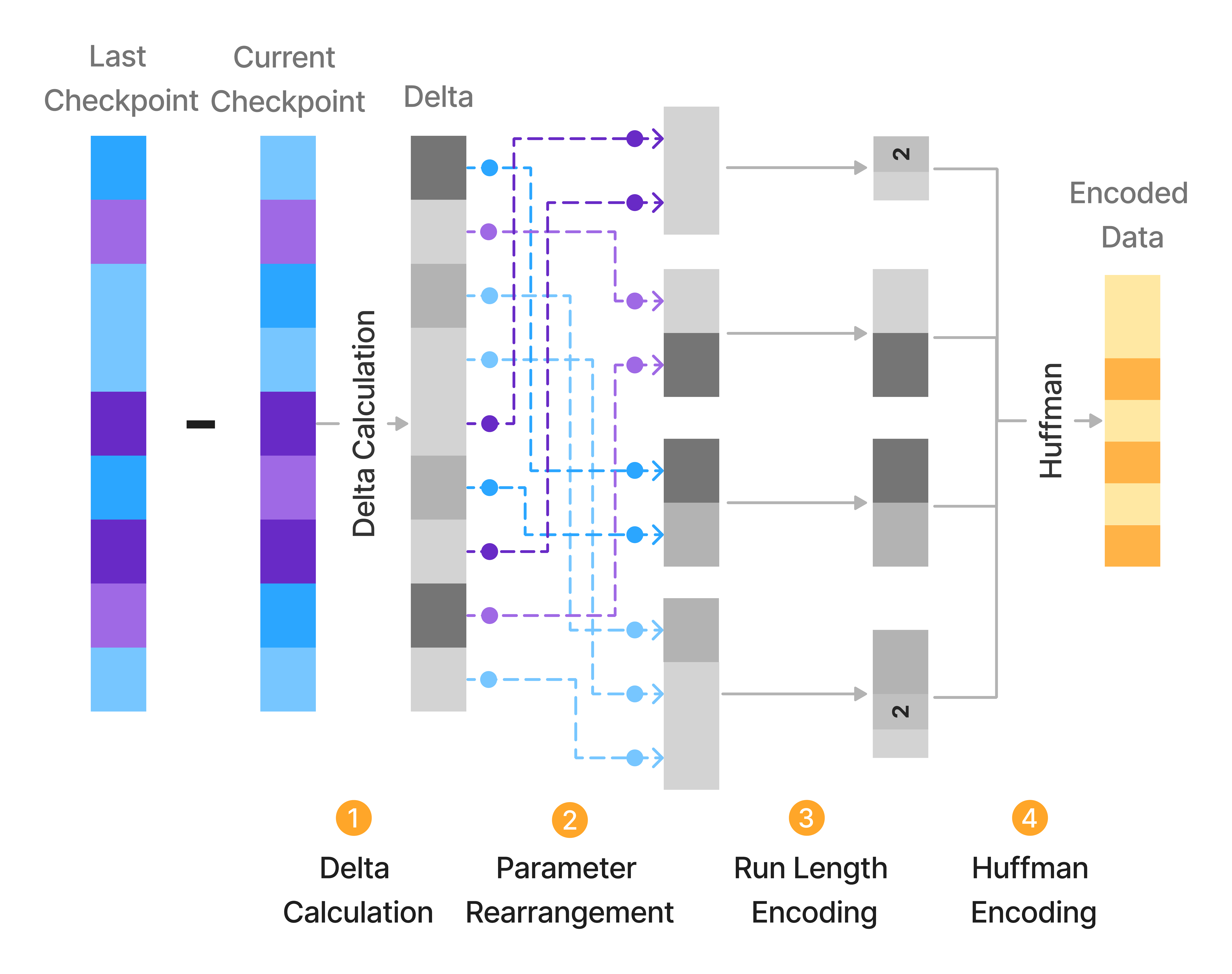}
    \caption{\small{{Rearrange and split computed delta based on the quantization bucket each parameter belonged to in the last checkpoint.%
    }}}
    \vspace{-1.5em}
    \label{fig:delta}
\end{figure}

\noindent\textbf{Optimization: Parameter Rearrangement.}
\label{sec:param-rearrangement}
The efficiency of run-length encoding depends on the arrangement of the parameters. We observe that the \emph{migration rate} (i.e., the fraction of parameters that change the quantization bucket between adjacent checkpoints) of parameters can differ significantly between different quantization buckets. In the default arrangement where parameters of all quantization buckets are interleaved, the run lengths get interrupted by the buckets with high migration rates, resulting in poor compression.

To prevent this, we rearrange parameters to isolate the parameters from different quantization buckets. After calculating the delta, we split delta values corresponding to each quantization bucket in the source checkpoint into separate groups, as shown in the second step of \cref{fig:delta}. Each group is then encoded separately. This ensures that quantization bins with high migration rates don't affect others. The process can easily be reversed during reconstruction and does not require any additional storage to obtain the parameter arrangement. \cref{algo:param-rearrangement,algo:param-reconstruction} provide a sketch of this algorithm. The approach is particularly useful when the number of quantization bins changes between checkpoints. As demonstrated in \S~\ref{sec:ablations}, this optimization can significantly improve the compression ratio.

\begin{algorithm}[t]
\caption{Rearranging parameters for efficient encoding}
\begin{algorithmic}[1]
\REQUIRE $\Delta$ (Delta values calculated from adjacent checkpoints)
\REQUIRE $C^{i-1}_q$ (The quantized integer values of the last checkpoint)
\ENSURE Delta groups (A dictionary containing delta values for each unique quantization bin)
\STATE bins $\leftarrow$ unique\_bins($C^{i-1}_q$)
\STATE delta\_groups $\leftarrow$ create\_empty\_dict(bins)
\FOR{$j \in [0, \text{length}(C^{i-1}_q)]$}
\STATE delta\_groups[$C^{i-1}_q[j]$].append($\Delta[j]$)
\ENDFOR
\RETURN delta\_groups
\end{algorithmic}\label{algo:param-rearrangement}
\end{algorithm}

\begin{algorithm}
\caption{Reconstructing parameters after rearrangement}
\begin{algorithmic}[1]
\REQUIRE delta\_groups (Delta groups obtained from the rearranging step)
\REQUIRE $C^{i-1}_q$ (The quantized integer values of the previous checkpoint)
\ENSURE $C^i_q$ (The quantized integer values of the current checkpoint)
\STATE $C^i_q$ $\leftarrow$ create\_zero\_array(length($C^{i-1}_q$))
\FOR{$j \in [0, \text{length}(C^{i-1}_q)]$}
\STATE bin $\leftarrow$ $C^{i-1}_q[j]$
\IF{delta\_groups[bin] $\neq$ $\emptyset$}
\STATE $C^i_q[j]$ $\leftarrow$ $C^{i-1}_q[j]$ + delta\_groups[bin].pop(0)
\ENDIF
\ENDFOR
\RETURN $C^i_q$
\end{algorithmic}\label{algo:param-reconstruction}
\end{algorithm}

\section{Implementation}

We have developed \sys and all the baseline systems discussed in \S~\ref{sec:eval-setup} using Pytorch, amounting to 7377 lines of Python code. To enhance the performance of delta encoding, we have implemented Huffman encoding and run-length encoding using 300 lines of C++ code. \sys can be effortlessly integrated into user applications with less than 10 lines of code modifications (\cref{lst:code-torch}). Additionally, we provide a custom callback for the MosaicML composer library, enabling single-line integration of our system into applications developed with Composer (\cref{lst:code-composer}). This strategy can be readily extended to frameworks such as Pytorch-lightning and Keras, both of which offer a high-level trainer interface.

\begin{lstlisting}[style=PyStyle,caption={\small \sys API usage with native Pytorch applications},label={lst:code-torch},]
from inshrinkerator import CompressorRegistry
# Before train loop
compressor = CompressorRegistry.get_compressor(
    system, model, eval_batches, search_config, search_metric, threshold)
...
while global_step < max_training_steps:
    ...
    # Inside train loop 
    if should_checkpoint(global_step):
        compressor.compress(global_step)
    ...
    # Before flushing gradients
    compressor.before_gradient_flush()
    optimizer.zero_grad()
    ...
    # After backwards pass
    compressor.on_backward_pass_end()
    optimizer.step()
    ...   
\end{lstlisting}

\begin{lstlisting}[style=PyStyle,caption={\small 
 MosaicML Composer application with \sys callback integration.},label={lst:code-composer}]
from composer import Trainer
from inshrinkerator.integrations.composer import InshrinkeratorCallback
# Add InshrinkeratorCallback to list of callbacks
trainer = Trainer(
    ...
    callbacks=[InshrinkeratorCallback()],
)
trainer.fit()
\end{lstlisting}

\section{Evaluation Setup}
\label{sec:eval-setup}
In this section, we describe the setup used in our evaluations. Our experimental environment comprises servers equipped with 8 NVIDIA A40s GPUs connected with peer-wise NVLINK, 128 AMD CPU cores, and 504 GB memory. 

We compare checkpoint compression systems along the following three metrics:

\noindent\textbf{Storage Efficiency.} We report the end-to-end compression ratio, calculated by comparing the total size of compressed model checkpoints to their uncompressed counterparts.

\noindent\textbf{Model Quality.} We compare the quality of machine learning models trained from compressed checkpoints against a baseline model trained without checkpoint compression and report the \emph{relative} quality degradation on accuracy or loss.%

\noindent\textbf{Runtime Overhead.} We report time spent on compression as a fraction of the total training time.%

\begin{table}[t]
\centering

\begin{tabular}{l s s @{}}\\
\toprule
Model  & Task & Parameters \\
\midrule
ResNet18 [\citenum{resnet}] & Vision & 11.7 M \\
ResNet50 [\citenum{resnet}] & Vision & 25.6 M \\
ResNet152 [\citenum{resnet}] & Vision & 60.2 M \\
MobileNet V3L [\citenum{mobilenet}]  & Vision & 5.5 M \\
VGG19 [\citenum{vgg}] & Vision & 143.7 M \\
ViT B32 [\citenum{vit}] & Vision & 88.2 M \\
ViT L32 [\citenum{vit}] & Vision & 306.5 M \\
BERT Base [\citenum{bert}] & Language & 110 M \\
BERT Large [\citenum{bert}]  & Language & 345 M \\
GPT-2 Medium [\citenum{gpt2}] & Language & 335 M \\
Pythia 1B [\citenum{pythia}] & Language & 1 B \\
\bottomrule
\end{tabular}
\vspace{0.5em}
\caption{\small Summary of models used in the evaluation.\amey{add more precision to pythia number}}
\label{tab:models}
\end{table}

\subsection{Evaluation Scenarios, Models and Datasets}
We evaluate \sys across 7 different model families, including tasks in vision and language modeling. These models vary in complexity, with the number of parameters ranging from 5.5 million to 1 billion. Details on all models and datasets used are reported in \cref{tab:models} and \cref{tab:datasets}.

\begin{table}[t]
\centering
\small

\begin{tabular}{m{2cm} L r @{\hspace{0px}}}\\
\toprule
Dataset & Task & Training Mode \\
\midrule
Imagenet [\citenum{imagenet}] & Image Classification & Pre-training \\
C4~[\citenum{c4}] & Masked Language modeling & Pre-training \\
Openwebtext~[\citenum{openwebtext}] &  Next Token Prediction & Pre-training \\
STS-B~[\citenum{glue}] & Semantic Textual Similarity & Transfer Learning \\
MNLI~[\citenum{glue}] & Natural Language Inference & Transfer Learning \\
Alpaca~[\citenum{alpaca}] & Instruction Fine-Tuning & Transfer Learning \\
\bottomrule
\end{tabular}
\caption{\small Summary of datasets used for evaluation}
\label{tab:datasets}
\end{table}

We evaluate \sys for two use cases: 

\noindent\textbf{Fault-tolerant Training.} We simulate multiple failures during the training process and perform the recovery using compressed checkpoints. Our evaluation involves four models: ResNet-152, ViT-32L, BERT Base, and GPT2 Medium. The first two, representing the vision models, are trained on the Imagenet 1K datasets for classification tasks. The latter two, BERT and GPT models, undergo training for language modeling tasks using C4 and OpenWebText datasets, respectively. Training of GPT2 Medium and ViT-32L requires $\approx$ 650 NVIDIA A40 GPU hours. The hyper-parameters and implementation details are provided in \cref{app:model-imps}.

\noindent\textbf{Transfer Learning.} We perform transfer learning tasks by training from a compressed checkpoint of pre-trained models. We use BERT Large and Pythia models. The BERT model is fine-tuned on GLUE  tasks, STS-B, SST-2, and MNLI, while the Pythia model is fine-tuned using the Alpaca dataset.%

\subsection{Baseline Systems}

We compare the performance of \sys against two state-of-the-art checkpoint compression systems, Check-N-Run~\cite{checknrun} and QD-compressor~\cite{QD}, as well as a post-training quantization system GOBO~\cite{gobo} that uses a similar quantization technique. Due to a lack of openly accessible functional implementations, we re-implemented their described quantization and delta compression algorithms within the \sys framework. All re-implementations benefit from GPU-accelerated quantization algorithms. None of the systems supports dynamic configurations, so we extended \sys's search capability to all baselines, with each baseline having a different search space defined by its quantization algorithm. Unless otherwise mentioned, we use a default per-checkpoint quality degradation threshold of $\epsilon=0.05$ in the configuration search. \cref{all:baseline-imps} provides additional implementation details.

\begin{itemize}
\item \QD: An entropy-based variable bit-width uniform quantization scheme and delta compression with Huffman encoding, used in QD-compressor. 
\item \CNR: Adaptive uniform quantization \cite{adaptive-uniform-quant} algorithm used in Check-N-Run. The delta compression algorithm was designed for recommendation models and does not directly extend to general DL workloads.
\item \GOBO: Naive \kmeans non-uniform quantization with L1 distance used in GOBO. GOBO does not have a delta compression algorithm. 
\end{itemize}

\section{Evaluation}
\label{sec:evaluation}

In this section, we evaluate the empirical performance of \sys. Experiments show that: 

\begin{itemize}
\item \sys achieves up to 26-39× compression ratio and less than 0.6\% runtime overhead in fault-tolerant training, consistently offering the best accuracy-storage tradeoff compared to alternatives (\S~\ref{sec:eval-fr}).
\item \sys achieves better or similar performance on downstream transfer learning tasks using compressed checkpoints that are 10$\times$ smaller than the size of the pre-trained models, outperforming the closest state-of-the-art compression system by 2$\times$ (\S~\ref{sec:eval-contlearn}).
\item \sys's non-uniform quantization algorithm and delta encoding scheme contribute meaningfully to the overall performance (\S~\ref{sec:ablations}).
\end{itemize}

\begin{figure*}[ht]
    \centering
    \includegraphics[width=0.99\textwidth]{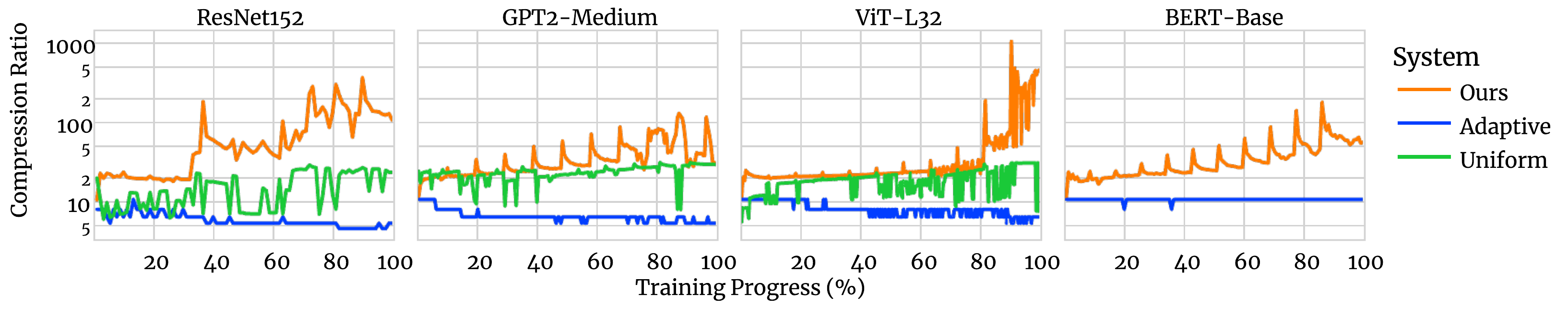}
    \caption{\small
        Compression ratio at every checkpoint during the training of various models.}
    \label{fig:cr-timeline}    
\end{figure*}

\subsection{End-to-end Evaluation: Fault-Tolerant Training}
\label{sec:eval-fr}

We conduct end-to-end training of four models under simulated failure conditions. Each experiment has 10 evenly distributed failures (every 3-8 hours of training) throughout the training duration. All models are trained using data parallelism across 8 NVIDIA A40 GPUs, with the largest models taking 3.5 calendar days (27 GPU days).

\cref{tab:failure-recovery} compares different systems in terms of model quality, compression ratio, and runtime overhead on the failure recovery task. Overall, \sys achieves between 26-39$\times$ reduction in storage requirements across various model families, marking a 3.5-6.5$\times$ improvement over \CNR, and 1.3-3.3$\times$ improvement over \QD. This is achieved while keeping the end-to-end relative degradation below 1\%, and the overhead of our system remains less than 0.6\% of the total training time. For ViT-L32, we observe that restoring from compressed checkpoints can act as a regularization mechanism, and result in better end-to-end performance.

While \QD achieves a higher compression ratio compared to \CNR due to its delta compression ability, we find the quality of models generated by the system to be inconsistent. In particular, we see a large drop in accuracy (to 4.43\%) after 8 restores for BERT-Base training. \GOBO is omitted in this experiment as it was originally designed for post-training quantization and is significantly slower than other in-training methods (details in \S~\ref{sec:ablations}).

\cref{fig:cr-timeline} shows the change of the compression ratio throughout training, where the peaks in the compression ratio correspond to checkpoints directly following restores from the compressed state. \sys's delta encoding scheme is particularly effective during the later stages of training when model updates decelerate. \sys's peak compression performance is above 100$\times$ for all models.

\begin{table}[t]
\small
\centering
\begin{tabular}{lllll}%
\toprule
& System & Deg. \% & CR & Ovr. \%  \\
\midrule
ResNet-152 & \CNR & 0.36 & 5.71 & 0.61 \\
(Base Acc: 77.41\% )& \QD & \textbf{-0.09} & 11.82 & \textbf{0.11} \\
& \ours & 0.50 & \textbf{39.09} & 0.35 \\
\midrule
ViT L32 & \CNR & -0.21 & 7.82 & 0.96 \\
(Base Acc: 71.36\%) & \QD & 0.12 & 16.31 & \textbf{0.18} \\
& \ours & \textbf{-0.46} & \textbf{26.19} & 0.51 \\
\midrule
BERT Base & \CNR & 3.97 & 6.78 & 1.03 \\
(Base Acc: 68.49\%)& \QD & \orangecross & \orangecross & \orangecross \\
& \ours & \textbf{0.80} & \textbf{29.25} & \textbf{0.58} \\
\midrule
GPT-2 M & \CNR & \textbf{0.25} & 6.35 & 0.55 \\
(Base Loss: 2.72) & \QD & 1.10 & 22.15 & \textbf{0.10} \\
& \ours & 0.61 & \textbf{29.81} & 0.32 \\
\bottomrule
\end{tabular}
\vspace{0.5em}
\caption{\small 
    Performance of checkpoint compression systems for failure-recovery. 
    Deg. \% = Relative quality degradation, CR = Compression ratio, Overhead \% =  Runtime Overhead.
}
\label{tab:failure-recovery}
\vspace{-1.5em}
\end{table}

\cref{fig:fr-tradeoff-single} illustrates the trade-off between storage overhead and model quality under different quality thresholds ($\epsilon$) in ResNet-152 training. At $\epsilon=0.01$, we achieve 21.82$\times$ compression, with no end-to-end quality degradation. At $\epsilon=0.05$, there is a small end-to-end relative accuracy drop (0.5\%) while the compression ratio goes up to 39.09$\times$. For all thresholds, \sys demonstrates an end-to-end degradation of less than 1\% and achieves a better compression-quality tradeoff when compared to \CNR and \QD. Notably, both baselines struggle to reach high compression ratios.

Finally, \sys can withstand up to 20 restores for ResNet152, which is equivalent to a restore every 4.5 epochs, while still maintaining a relative error of 1.1\%; for reference, the error is only 0.16\% for 5 restores.

\begin{table}[htb]
\centering
\begin{tabular}{@{\hspace{0px}} l @{\hspace{0px}} l *{5}{@{\hspace{2px}}c} @{\hspace{0px}}}
\toprule
Model & Task & Metric & Base & Adap & Uni & \ours \\
\midrule
\multirow{3}{*}{BERT--L} & & CR & - & 4.57 & 4.00 & \textbf{11.32} \\
& SST-2 & Acc & 93.2 & 92.9 & 93.0 & \textbf{93.3} \\
&  MNLI & Acc & 85.9 & \textbf{86.0} & \textbf{86.0} & 85.9 \\
& STS-B & PC & 86.1 & 87.6 & 87.8 & \textbf{88.3} \\
\midrule
\multirow{2}{*}{Pythia 1B} & & CR & - & 5.33 & 4.57 & \textbf{9.99} \\
& Alpaca & CE & 0.894 & 0.910 & \textbf{0.906} & 0.919 \\
\bottomrule
\end{tabular}
\vspace{0.5em}
\caption{\small Performance comparison on the transfer learning task.  CR = Compression ratio, Acc = Accuracy, PC = Pearson correlation, CE = Cross-entropy Loss.}
\vspace{-1em}
\label{tab:transfer-learning}
\end{table}

\subsection{End-to-end Evaluation: Transfer Learning}
\label{sec:eval-contlearn}
For the transfer learning experiments, we perform the compression process on pre-trained model checkpoints, then deploy the compressed models for downstream task fine-tuning. The checkpoints are obtained with a quality threshold $\epsilon=0.05$.  We evaluate the BERT-Base and Pythia 1B models on a suite of downstream tasks. \cref{tab:transfer-learning} shows results for all three fine-tuning tasks. We observe that for both the models, \sys can achieve compression ratios up to 11$\times$ even without delta encoding, ~2$\times$ higher compared to other systems. Notably, we find that the use of compressed checkpoints does not harm the performance of the downstream model, and we achieve performance comparable to baseline across all tasks and models.

\subsection{Microbenchmarks \& Ablation Studies}
\label{sec:ablations}

\begin{table*}[t]
\centering
\begin{tabular}{l *{3}{*{5}{c}}}
\toprule
Model & \multicolumn{12}{c}{Mean Number of Quantization Bins ($\downarrow$)}  \\
 & \multicolumn{4}{c}{0.1\% Degradation} & \multicolumn{4}{c}{1\% Degradation} & \multicolumn{4}{c}{5\% Degradation} \\
 \cmidrule{2-5}\cmidrule(lr){6-9}\cmidrule{10-13}
 & CNR+ & QD+ & GB+ & \ours & CNR+ & QD+ & GB+ & \ours & CNR+ & QD+ & GB+ & \ours  \\
\midrule
ResNet18 & 42.84 & \orangecross & \orangecross & \textbf{21.25} & 33.75 & 35.51 & 33.12 & \textbf{16.85} & 23.55 & 23.25 & 17.50 & \textbf{10.19} \\
ResNet50 & 46.36 & \orangecross & \redcross & \textbf{21.39} & 35.07 & 53.65 & 31.78 & \textbf{14.36} & 23.06 & 33.69 & 20.12 & \textbf{8.24} \\
ResNet152 & 59.32 & \orangecross & \redcross & \textbf{20.18} & 40.71 & \orangecross & \redcross & \textbf{11.92} & 26.74 & 39.36 & \redcross & \textbf{7.20} \\
MobileNet V3 Large & 65.24 & \orangecross  & \orangecross & \textbf{27.92} & 54.44 & \orangecross & \orangecross & \textbf{32.48} & 36.68 & 47.96 & 30.79 & \textbf{17.91} \\
VGG19 & 50.46 & \orangecross & \redcross & \textbf{13.06} & 31.01 & 45.36 & \redcross & \textbf{8.99} & 18.48 & 25.15 & \redcross & \textbf{6.26} \\
ViT B32 & 37.01 & \orangecross & \redcross & \textbf{19.88} & 28.37 & 38.43 & \redcross & \textbf{14.33} & 20.85 & 28.79 & \redcross & \textbf{9.75} \\
ViT L32 & 37.40 & \orangecross & \redcross & \textbf{13.31} & 32.95 & \orangecross & \redcross & \textbf{10.79} & 25.36 & 35.34 & \redcross &  \textbf{8.46} \\
BERT Base & \orangecross & \orangecross & \redcross & \textbf{54.30} & 81.45 & \orangecross & \redcross & \textbf{22.75} & 43.18 & 53.10 & \redcross & \textbf{12.74} \\
GPT-2 Medium & 106.86 & \orangecross & \redcross & \textbf{54.71} & 47.71 & 60.17 & \redcross & \textbf{16.19} & 24.60 & 31.14 & \redcross & \textbf{9.08} \\

\bottomrule
\end{tabular}
\vspace{0.5em}
\caption{
    \small One-shot compression performance under quality degradation constraints. We consider the mean number of quantization bins required to achieve the quality degradation constraint across ten evenly spaced checkpoints during training. System timeouts (30 mins for each evaluation of each configuration) are represented by \redcross, while scenarios, where no suitable configuration is found, are marked by \orangecross.}
\label{tab:one-hot-small}
\end{table*}

\begin{table}[htbp]
\centering
\begin{tabular}{@{} l *{4}{@{\hspace{6px}}c} @{}}
\toprule
Model & \multicolumn{4}{c}{Runtime in Milliseconds ($\downarrow$)} \\
 & \multicolumn{2}{c}{Quantile} & \multicolumn{2}{c}{Clustering (K=32)}  \\
\cmidrule{2-3}\cmidrule(lr){4-5}
& CuPy [\citenum{cupy}] & \ours & CuML [\citenum{cuml}] & \ours \\
\midrule
ResNet152 & 71.9 & \textbf{15.2} & 10200 & \textbf{1160} \\
ViT L32 & 364 & \textbf{80.9} & 50100 & \textbf{1140} \\
BERT Base & 155 & \textbf{33.9} & 20700 & \textbf{411} \\
GPT-2 M & 390 & \textbf{86.2} & 53400 & \textbf{819} \\
\bottomrule
\end{tabular}
\vspace{0.5em}
\caption{\small Runtime of various operations in \sys compared with off-the-shelf implementations.}
\label{tab:runtime-abalation}
\vspace{-1em}
\end{table}

\noindent\textbf{{Quality of Non-uniform Quantization.}} To evaluate different quantization algorithms, we collect ten checkpoints spaced throughout the training of different models. For each algorithm, we determine the minimum number of quantization bins needed to meet per-checkpoint thresholds of $\epsilon=0.001, 0.01, 0.05$, then calculate the mean number of bins across all checkpoints. We disable parameter pruning in our system for this experiment to allow for easier comparison. The results are presented in  \cref{tab:one-hot-small}. 

We observe that \sys consistently outperforms across all the models and quality thresholds. \GOBO, which uses a na\"ive implementation of k-means clustering algorithm for quantization, takes over 30 minutes for models with more than 30 million parameters, while \sys usually takes a few seconds. \QD fails to find valid quantization configurations for strict quality thresholds. Among baselines, only \CNR manages to constantly find quantization configurations that satisfy the quality thresholds. However, it requires a 2-4$\times$ higher number of quantization bins to achieve the same quality as \sys due to their use of uniform quantization.

\noindent\textbf{{Quantization Runtime.}} \cref{tab:runtime-abalation} reports the runtime of two key operations in our quantization algorithm: computing qualities for pruning and protection, and the \kmeans clustering for quantization. We implement our approach in Pytorch with support for both CPU and GPU execution. We compare the performance of our implementation against off-the-shelf GPU-enabled implementations for quantile estimation in CuPy and \kmeans clustering in CuML respectively. Our quantile estimation algorithm achieves 3-4$\times$ higher performance compared to CuPy, while we note a speed-up of up to 65$\times$ in quantile computation.

\begin{figure}
    \centering
    \includegraphics[width=0.9\linewidth]{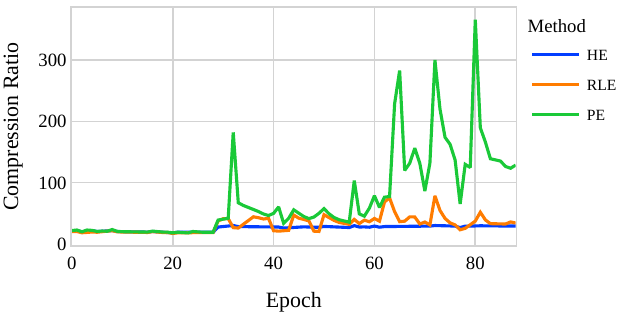}
    \caption{
    \small 
        Parameter rearrangement significantly improves the performance of delta encoding schemes (ResNet-152).
        HE = Huffman Encoding, RLE = Run Length Encoding + Huffman Encoding, PE = Parameter Rearrangement-based Encoding.}
    \label{fig:delta-abalation}
    \vspace{-1em}
\end{figure}

\noindent\textbf{Parameter Rearrangement-based Delta Encoding.} 
The key optimization in \sys's delta encoding scheme is parameter rearrangement, which enables it to decrease the entropy in each compressed chunk. 
To assess the contribution of this optimization, we compare three variants of our delta encoding scheme: the Huffman encoding variant used in \QD (HE), run-length plus Huffman encoding (RLE), and parameter rearrangement-based delta encoding (PE) on the ResNet-152 training job. The results are illustrated in \cref{fig:delta-abalation}. With parameter rearrangement, we achieve much higher compression in the later phase of training. Notably, we obtain a maximum compression ratio of 366 with rearrangement compared to 79 with RLE and 30 with HE. This translates to overall lower storage overhead. The Huffman and run-length encoding-based schemes only achieve overall storage overhead reduction of 25$\times$ and 28$\times$ respectively, as opposed to 39$\times$ with parameter rearrangement.

\section{Discussion}
\label{sec:discussion}

\sys demonstrates significant reduction in checkpoint storage overhead with minimal impact on model accuracy. However, several aspects of the system design warrant further investigation and development:

\begin{itemize}[leftmargin=*]
    \item \textbf{Delta Compression Optimization:} While delta compression optimizes storage, it can increase restore latency as the chain of delta checkpoints grows. Future work should explore adaptive policies for balancing storage efficiency and restore time, potentially incorporating periodic full checkpoints or intermidate compaction. These policies could dynamically adjust based on training progress, failure rates, and available resources.

    \item \textbf{Integration with High-Frequency Checkpointing:} Recent work \cite{checkfreq} has proposed dynamic, high-frequency, iteration-level checkpointing to minimize work loss due to failures. As model sizes increase, the storage and network costs of such frequent checkpoints become significant. Future research could explore how \sys can be integrated with these systems to alleviate storage and network bottlenecks while maintaining the benefits of frequent checkpointing. This integration may require adapting \sys's configuration search for short intervals between checkpoints and considering the impact on I/O bandwidth and overall system performance.

    \item \textbf{Adaptive Application-Specific Tuning:} The quality degradation threshold $\epsilon$ should be dynamically tuned based on application requirements and environmental factors. Future work could develop a framework for automatically determining optimal $\epsilon$ values using characteristics such as model architecture, dataset properties, and training environment. Online learning techniques could be employed to adapt $\epsilon$ during training, balancing compression efficiency with application-specific quality requirements.

    \item \textbf{Quantization Heuristics:} Our current approach of monotonically increasing quantization precision during training prioritizes model quality. Future work could explore more sophisticated strategies, to optimize the quantization strategy throughout training. Multi-objective optimization approaches could be developed to balance compression ratio, model quality, and computational overhead dynamically.
\end{itemize}

These considerations open avenues for further research in adaptive checkpoint compression strategies, potentially leading to even more efficient and versatile checkpoint storage solutions for large-scale deep learning training.

\section{Related Work}
\label{sec:related}

Our work intersects with three main areas of research: model compression techniques, checkpoint compression systems, and checkpointing at scale. We discuss each of these below.

\subsection{Model Compression Techniques} Model compression techniques aim to reduce the storage, memory, and computational requirements of deep learning models without compromising their performance. %

\minihead{Quantization} Quantization has been widely studied, particularly for efficient model inference. Post-Training Quantization (PTQ) \cite{banner2019post, cai2020zeroq, fang2020post, lowBitQuantization, He2018ECCV} and Quantization-Aware Training (QAT) \cite{courbariaux2015binaryconnect, gysel2016hardware, NIPS2016d8330f85, lin2015neural, rastegari2016xnor, tailor2020degree} are two main approaches. While QAT incorporates quantization errors into the loss function during training, our proposed in-training checkpoint quantization operates on intermediate model states without affecting the training optimization function.

Variable bit-width quantization assigns different bit-widths to model layers based on their importance \cite{faghri2020adaptive, kadambi2020comparing, choi2016towards, NEURIPS2020d77c7035, dong2019hawq, li2018optimization}. This approach enables better compression without compromising performance, although determining the optimal bit-width for each layer can be  time-consuming.

\minihead{Pruning} Pruning is another popular model compression technique in which unimportant parameters are removed from the model. Magnitude-based pruning techniques remove parameters with the smallest absolute values \cite{han2015learning}, while sensitivity-based pruning methods consider the impact of parameter removal on the model's loss \cite{sen-5, sen-6, sen-7, sen-1}.

Recent work on the Lottery Ticket Hypothesis with Rewinding \cite{lth} has shown that simple magnitude-based heuristics can identify extremely sparse subnetworks early in training that yield performance similar to the full network when trained in isolation. Our work leverages this insight for in-training pruning, applying it to checkpoints during training.

\subsection{Checkpoint Compression Techniques}

Several studies have explored compression techniques specifically for model checkpointing systems to minimize storage overhead \cite{QD,lcCompression,checknrun,deltadnn}. These systems typically employ a combination of model quantization and delta encoding, leveraging the high similarity between adjacent checkpoints.

\minihead{Quantization in Checkpoint Compression} Recent checkpoint compression systems have employed various quantization techniques. LC-Checkpoint \cite{lcCompression} uses an exponent-based non-uniform quantization scheme, choosing the non-linear function heuristically. Delta-DNN \cite{deltadnn} proposes a uniform quantization approach with an exhaustive configuration search strategy. QD-Compressor \cite{QD} employs an entropy-based variable bit-width uniform quantization scheme, while Check-N-Run \cite{checknrun} applies an adaptive uniform quantization method \cite{adaptive-uniform-quant} for large deep recommendation models.

\minihead{Delta Compression} Delta compression is a key technique in reducing storage overhead for model checkpoints. Each compressed checkpoint only stores the differences since the previous checkpoint, allowing for reconstruction of the final checkpoint by iteratively applying these deltas.

While QD-Compressor \cite{QD} proposes a delta encoding algorithm, it doesn't account for changes in quantization bit widths induced by their variable bit-width technique. Check-N-Run \cite{checknrun} introduces a delta compression scheme specifically for recommendation models, but its applicability to traditional deep learning models is limited.

\subsection{Checkpointing at Scale} As machine learning models grow in size and complexity, efficient checkpointing strategies become crucial.

\minihead{Distributed Checkpointing} Distributed checkpointing systems \cite{distChkpt-1, checknrun} save partial model states across multiple nodes within a distributed system. This approach is especially useful when the model state cannot fit on a single device and reduces the overhead associated with storing and retrieving large checkpoints. Our proposed method is fully compatible with distributed checkpointing, enabling checkpoint compression even when the model state is distributed across different processes.

\minihead{Frequent Checkpointing} Frequent checkpointing minimizes the risk of losing progress due to unexpected failures and allows for more granular control over the training process. CheckFreq \cite{checkfreq} facilitates frequent checkpointing through a two-phase process that overlaps computation with checkpointing operations. It dynamically determines checkpointing intervals to balance trade-offs between overhead and wasted work. %

\section{Conclusion}
\label{sec:conclusion}

\sys combines non-uniform quantization, dynamic quantization configuration search, and quantization-aware delta compression to enable efficient in-training checkpointing. In our experiments, \sys demonstrates the ability to induce a tradeoff space between the end-to-end model quality and checkpoint storage overhead, achieving up to \textit{10$\times$ compression without loss in model quality} for fault-tolerant training and transfer learning.

\section*{Acknowledgments}
This material is based upon work partially supported by the National Science Foundation under grant number CNS-2420977 and IIS-2335881.
We would also like to express our sincere gratitude to the reviewers, the PC panel, and especially our shepherd Dr. 
Saurabh Bagchi for their insightful comments and thoughtful consideration, which significantly improved the quality of this paper. \\
\textbf{Disclaimer}: Any opinions, findings, and
conclusions or recommendations expressed in this material are those of the authors and do not necessarily
reflect the views of the National Science Foundation.

\bibliographystyle{ACM-Reference-Format}
\bibliography{main}

\clearpage
\appendix

\section{Evaluation Setup}

\subsection{Model Implementations}
\label{app:model-imps}

For pre-training the ResNet-152 and ViT-32L models on Imagenet1k, we utilize the implementation and hyperparameters available in the Torchvision package \cite{torchvision}, while BERT-Base and GPT-2 Medium training leverages implementations provided by MosaicML \cite{mosiac-bert} and NanoGPT \cite{nangpt} repositories respectively. Notably, we maintain the original training duration and model hyperparameters, avoiding any potential skewing of evaluation results, with the exception of GPT-2 Medium where we truncate the training procedure to 30 billion tokens to restrict the training duration. In ViT-32L we find that the default parameters result in unstable training, so we reduce the learning from 0.003 to 0.001.

\subsection{Baseline Implementation}
\label{all:baseline-imps}

In this section, we describe modifications made to Check-N-Run~\cite{checknrun}, QD-compressor~\cite{QD}, and GOBO~\cite{gobo} algorithms during our implementation of the baselines.

For QD-compressor, we address an oversight in the original description which restricted the applicability of the delta compression algorithm to cases where the number of bins assigned to a layer changes between successive checkpoints.

While GOBO was originally proposed for post-training quantization, we implement it within the \sys framework to allow in-training compression. We also modify the GOBO clustering algorithm to include a soft termination condition which significantly reduces the runtime.

Since Check-N-Run was originally developed with the intent to support deep learning recommendation models, the delta compression algorithm proposed in the system cannot be used for regular deep learning workloads. Thus we only implement the quantization algorithm \cite{adaptive-uniform-quant} used by Check-N-Run for comparison.

Recall that we also added \sys's dynamic configuration search capability for all baselines. \cref{tab:search-space,tab:QDsearch-space,tab:CNRsearch-space,tab:GOBOsearch-space} list the range of values used for the configuration parameters for \sys, QD, CNR, and GOBO respectively.

\begin{table}[htpb]
\centering
\small
\begin{tabular}{l l}\\
\toprule
Parameter & Values \\
\midrule
Number of bins & 4, 6, 8, 12, 16, 32 \\
Number of bins (Embedding Layers) & 16, 32 \\
Pruning Fraction & 0, 0.1, 0.2, 0.3, 0.4, 0.5 \\
Pruning Metric & Magnitude, Sensitivity \\
Protection Fraction & 0.0005, 0.005, 0.01 \\
\bottomrule
\end{tabular}
\caption{Search space for \sys quantization configuration}
\label{tab:search-space}
\end{table}

\begin{table}[htbp]
\centering
\small
\begin{tabular}{l l}\\
\toprule
Parameter & Values \\
\midrule
Minimum number of bins & 8, 16, 32, 64, 128, 256 \\
Maximum number of bins & 8, 16, 32, 64, 128, 256 \\
\bottomrule
\end{tabular}
\caption{Search space for QD quantization configuration}
\label{tab:QDsearch-space}
\end{table}

\begin{table}[htbp]
\centering
\small
\begin{tabular}{l l}\\
\toprule
Parameter & Values \\
\midrule
Number of bins & 8, 16, 32, 64, 128, 256 \\
Number of step bins & 10, 25, 50, 100 \\
Range & 0.1, 0.2, 0.3, 0.4, 0.5 \\
\bottomrule
\end{tabular}
\caption{Search space for CNR quantization configuration}
\label{tab:CNRsearch-space}
\end{table}

\begin{table}[htpb]
\centering
\small
\begin{tabular}{l l}\\
\toprule
Parameter & Values \\
\midrule
Number of bins & 8, 16, 32, 64, 128, 256 \\
Number of bins (Embedding Layers) & 16, 32 \\
Outlier threshold & -4 \\
Max Iteration & 1000 \\
\bottomrule
\end{tabular}
\caption{Search space for GOBO quantization configuration}
\label{tab:GOBOsearch-space}
\end{table}

\subsection{Additional Details for Approximate K-Means}\label{app:kmeans}

\subsubsection{Proof for Error-bound Guarantees}

We restate the notations and propositions here for clarity. Let $X= \{ x_1, \ldots, x_n\}$ and $\tX = \{ \tilde{x_1}, \ldots, \tilde{x_n}\}$ denote a set of $n$ points where $x_i, \tx_i \in [0,1]$ and for all $i$, $|x_i - \txi| \leq \alpha x_i$ for some $\alpha \in (0,1)$. Let $M = \{\mu_1, \ldots, \mu_k\}$ be the cluster centers of $X$ and the loss of $M$ over inputs $X$ is defined as $\loss(M) = \frac{1}{n}\sum_{x \in X} \min_{\mu \in M} |x - \mu|$. The cluster centers $\tM$ and loss $\qloss$ over $\tX$ are defined similarly. The optimal loss $\loss(\oM)$ is the loss of the optimal cluster centers $M^{*} = \argmin_{M} \loss(M)$. The optimal loss $\qloss(\otM)$ and cluster centers $\otM$ over $\tX$ are defined similarly. Let $\mu_{M}(i) = \argmin_{\mu} |\mu - x_i|$ denote the cluster center in $M$ that is closest to the ith input $x_i$ ($\tmu_{\tM}(i)$ is defined similarly for $\txi$).

Note that the loss $\loss(\tM)$ over the original inputs $X$ with any clustering $\tM$ over $\tX$ is at most $\alpha$ greater than the loss $\qloss(\tM)$,

\[   
    \qloss(\tM) = \frac{1}{n} \sum_{i=1}^n |\tmu_{\tM}(i) - \txi|  \leq \frac{1}{n} \sum_{i=1}^n (|\tmu_{\tM}(i) - x_i| + \alpha x_i) 
\]

\[
 = \loss(\tM) + \frac{\alpha}{n} \sum_{i=1}^n x_i \leq \loss(\tM) + \alpha
\]

where the first inequality follows from triangle inequality and the second follows from the fact that $\frac{1}{n}\sum_{i=1}^n x_i \leq 1$. Using similar arguments, one can verify that $\loss(\tM) \leq \qloss(\tM) + \alpha$.

\begin{proposition}
Given any two sets $X, \tX$ of $n$ points as defined above such that $|x_i - \txi| \leq \alpha x_i$ $\forall i$. For any $k \geq 1$, the difference between the optimal loss of clusterings over $X$ and $\tX$ are bounded as, 

\begin{equation}
    |\loss(\oM) - \qloss(\otM)| \leq \alpha
\end{equation}
\end{proposition}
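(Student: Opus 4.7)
The plan is to chain together two facts already in hand. The displayed calculation immediately preceding the proposition establishes the pointwise bound $|\loss(M) - \qloss(M)| \leq \alpha$ for \emph{any} fixed clustering $M$, obtained via the triangle inequality applied centerwise together with $\frac{1}{n}\sum_i x_i \leq 1$ (which uses $x_i \in [0,1]$). I would reuse this, applied to both the optimal clustering $\oM$ of $X$ and the optimal clustering $\otM$ of $\tX$.

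Next I would invoke optimality in the two directions that matter. By definition $\loss(\oM) \leq \loss(\otM)$ (since $\oM$ minimizes $\loss$ over all clusterings) and $\qloss(\otM) \leq \qloss(\oM)$ (since $\otM$ minimizes $\qloss$). Combining each with the appropriate side of the pointwise bound gives two one-sided inequalities:
\begin{align*}
\loss(\oM) - \qloss(\otM) &\leq \loss(\otM) - \qloss(\otM) \leq \alpha, \\
\qloss(\otM) - \loss(\oM) &\leq \qloss(\oM) - \loss(\oM) \leq \alpha.
\end{align*}
Taking the maximum of the two yields $|\loss(\oM) - \qloss(\otM)| \leq \alpha$.

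There is no real obstacle here: the work was essentially done in the preamble, and the proposition follows from a two-line optimality argument. The only subtlety worth spelling out is the directionality, i.e.\ making sure I use $\oM$'s optimality in the first chain (to upper bound $\loss(\oM)$ by $\loss(\otM)$) and $\otM$'s optimality in the second chain (to upper bound $\qloss(\otM)$ by $\qloss(\oM)$); swapping these would instead produce a weaker $2\alpha$ bound. I would also note explicitly that the pointwise lemma is what lets us switch loss functions while \emph{keeping the clustering fixed}, which is exactly what makes the tighter $\alpha$ bound (rather than $2\alpha$) attainable.
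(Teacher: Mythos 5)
Your proof is correct and follows essentially the same approach as the paper's: both establish $\qloss(\otM) \leq \loss(\oM) + \alpha$ by using $\otM$'s optimality to move from $\otM$ to $\oM$, then applying the triangle inequality with the constraint $\frac{1}{n}\sum_i x_i \leq 1$, and obtain the reverse direction by the symmetric argument. The only stylistic difference is that you factor out the pointwise bound $|\loss(M)-\qloss(M)|\leq\alpha$ as a standalone lemma and invoke it twice, whereas the paper inlines the triangle-inequality step within each chain.
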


\begin{proof} We first show that $\qloss(\otM) \leq \loss(\oM) + \alpha$.
   \[
   \qloss(\otM) = \frac{1}{n} \sum_{i=1}^n |\tmui - \txi| \leq \frac{1}{n} \sum_{i=1}^n |\mui - \txi| 
   \]
   \begin{equation}\label{Eq:boundsize1}
   \leq \frac{1}{n} \left ( \sum_{i=1}^n |\mui - x_i| + \alpha \sum_{i=1}^n x_i \right) \leq \loss(\oM) + \alpha
   \end{equation}

In \cref{Eq:boundsize1}, the first inequality follows from the fact that $\otM$ is the optimal clustering over $\tX$ and hence replacing the cluster centers $\tmui$ with $\mui$ leads to a quantity which is greater than equal to $\qloss(\otM)$. The next inequality follows from triangle inequality and uses the fact that $|x_i - \txi| \leq \alpha x_i$. Using similar arguments, one can verify that $\loss(\oM) \leq \qloss(\otM)  + \alpha$. Combining the two equations, we have that $|\loss(\oM) - \qloss(\otM)| \leq \alpha$ which is the main statement of the proposition.

The proof for $\loss(\oM) \leq \qloss(\otM)  + \alpha$ follows similar arguments leading to the main statement of the result.

\end{proof}

\begin{remark} The results show that the difference between optimal loss of k-means clustering over inputs $X$ and its grouped counterpart $\tX$ is bounded by $\alpha$. Building upon this, we will show that when the clustering is performed with sample-weighted k-means++, the expected loss over $\tX$ is bounded by the optimal loss over $X$ and $\alpha^2$. 
\end{remark}

To keep notations close to \cite{kmeanspp}, we denote the loss over squared norm as $\phi(M) = \frac{1}{n}\sum_{x \in X} \min_{\mu \in M} |x - \mu|^2$. The loss $\phiq(M)$ is defined similarly over $\tX$: $\phiq(\tM) = \frac{1}{n}\sum_{\tx \in \tX} \min_{\tmu \in \tM} |\tx - \tmu|^2$. 

\begin{proposition}\label{prop2}
Given any two sets $X, \tX$ of $n$ points as defined above such that $|x_i - \txi| \leq \alpha x_i$ $\forall i$. For any $k \geq 1$, let $\wM$ be the clustering obtained by applying weighted kmeans++ (Algorithms \ref{algo:initialize} and \ref{algo:kmeans}) over the set $\tX$. Then,

\begin{equation}
    E[\phiq(\wM)] \leq 16 (\ln k + 2) (\phi(\oM) + \alpha^2)
\end{equation}

\end{proposition}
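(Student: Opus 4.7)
The plan is to reduce Proposition~\ref{prop2} to the classic k-means++ competitive-ratio guarantee of Arthur--Vassilvitskii and then absorb the approximation error between $X$ and $\tX$ in the same style as the proof of Proposition~\ref{proposition:prop1}, but adapted to the squared-distance loss $\phi,\phiq$.

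First, I would invoke the standard k-means++ guarantee in its weighted form. Since the weighted k-means++ procedure (Algorithms for $\texttt{WeightedKMeans}$ and initialization in \cref{app:weighted-kmeanspp}) is run over $\tX$ using the histogram bucket frequencies as sample weights, it is operationally equivalent to running vanilla k-means++ on the expanded multiset $\tX$. The Arthur--Vassilvitskii theorem then yields
\begin{equation*}
E[\phiq(\wM)] \;\leq\; 8(\ln k + 2)\, \phiq(\otM).
\end{equation*}

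Next I would bound $\phiq(\otM)$ in terms of $\phi(\oM)$. By optimality of $\otM$ on $\tX$, we have $\phiq(\otM) \leq \phiq(\oM)$. Expanding and choosing, for each $i$, the center $\mui \in \oM$ that is closest to $x_i$,
\begin{equation*}
\phiq(\oM) \;\leq\; \frac{1}{n}\sum_{i=1}^n |\txi - \mui|^2.
\end{equation*}
The triangle inequality gives $|\txi - \mui| \leq |x_i - \mui| + |x_i - \txi| \leq |x_i - \mui| + \alpha x_i$, and since $(a+b)^2 \leq 2a^2 + 2b^2$ together with $x_i \in [0,1]$,
\begin{equation*}
|\txi - \mui|^2 \;\leq\; 2|x_i - \mui|^2 + 2\alpha^2 x_i^2 \;\leq\; 2|x_i - \mui|^2 + 2\alpha^2.
\end{equation*}
Averaging over $i$ gives $\phiq(\oM) \leq 2\phi(\oM) + 2\alpha^2$, hence $\phiq(\otM) \leq 2(\phi(\oM) + \alpha^2)$.

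Combining the two estimates,
\begin{equation*}
E[\phiq(\wM)] \;\leq\; 8(\ln k + 2)\cdot 2\bigl(\phi(\oM) + \alpha^2\bigr) \;=\; 16(\ln k + 2)\bigl(\phi(\oM) + \alpha^2\bigr),
\end{equation*}
which is the claim. The main obstacle I anticipate is the first step: one must justify that the weighted k-means++ procedure in \cref{algo:kmeans} inherits the Arthur--Vassilvitskii bound with the stated constant $8(\ln k+2)$. When the weights are frequency counts, this is immediate by expansion to a multiset; however, the paper's sample weights are a convex combination of frequency and magnitude ($\sigma$-blended), so for Proposition~\ref{prop2} one either restricts to the $\sigma = 1$ (pure-frequency) case or shows that the competitive-ratio proof of Arthur--Vassilvitskii generalizes to arbitrary nonnegative sample weights. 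The latter is a routine reweighting of the standard inductive argument on the cluster sampled in each round, and I would either cite the weighted extension or include it in \cref{app:weighted-kmeanspp}; the rest of the proof (the passage from $X$ to $\tX$ via the $(a+b)^2 \leq 2a^2+2b^2$ step) is mechanical.
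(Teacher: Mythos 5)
Your proof is correct and follows essentially the same route as the paper: bound $\phiq(\otM) \leq 2(\phi(\oM)+\alpha^2)$ via the optimality of $\otM$ on $\tX$, the triangle inequality, and $(a+b)^2 \le 2a^2+2b^2$, then compose with Theorem~1.1 of \cite{kmeanspp} applied to the weighted k-means++ procedure. Your closing caveat---that the $8(\ln k+2)$ competitive ratio is only immediate when the sample weights are pure frequency counts ($\sigma=1$) rather than the $\sigma$-blended frequency/magnitude weights actually used---is a legitimate gap, but one the paper's own proof also passes over silently by citing Theorem~1.1 directly.
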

\begin{proof}
    We first upper bound the optimal loss $\phiq(\otM)$ with $\phi(\oM)$ and $\alpha$. See that

 \[
   \phiq(\otM) =  \frac{1}{n} \sum_{i=1}^n |\tmui - \txi|^2  \leq \frac{1}{n} \sum_{i=1}^n |\mui - \txi|^2
   \]
   \begin{equation}\label{Eq:squaredbound}
   \leq 2 \left ( \frac{1}{n}\sum_{i=1}^n |\mui - x_i|^2 + \frac{\alpha^2}{n} \sum_{i=1}^n x_i^2 \right)  \leq 2(\phi(\oM) + \alpha^2)
   \end{equation}

   Applying weighted-kmeans++ (Algorithms \ref{algo:initialize} and \ref{algo:kmeans}) over the set $\tX$ produces $\wM$. Using Theorem 1.1 in \cite{kmeanspp}, we have that $E[\phiq(\wM)] \leq 8 (\ln k + 2) (\phiq(\otM))$ and using Eq. \ref{Eq:squaredbound} to replace $\phiq(\otM)$, we obtain the main statement of the proposition.

\end{proof}

\begin{remark} The result implies that \cref{algo:initialize,algo:kmeans} is $O(\ln k)$-competitive with $\phi(\oM) + \alpha^2$ while being orders of magnitude faster in comparison with the vanilla k-means++ which is $O(\ln k)$-competitive with $\phi(\oM)$.
\end{remark}

We now analyze certain conditions where the clustering memberships of every point in the two cases ($X$ and $\tX$) would be identical. Let $C = \{X_1, \ldots, X_k\}$ be a clustering (or partition) of $X \subseteq [0, 1]$ with centers $M = \{\mu_1, \ldots, \mu_k\}$. For all $x \in X_i$, and $i \neq j$, $|x - \mu_i| \leq |x - \mu_j|$. Let the minimum distance between any two points in two different clusters in $C$ be the split of the clustering denoted as $split_C(X)$. Let the maximum distance between any two points in the same cluster be the width of the clustering denoted as $width_C(X)$.

\begin{definition}
   \cite{ben2014clustering} A clustering $C = \{X_1, \ldots, X_k\}$ of $X \subseteq [0, 1]$ is $\sigsep$ for $\sigma \geq 1$ if  $split_C(X) > \sigma \cdot width_C(X)$.
\end{definition}

In other words, any clustering $C$ is $\sigsep$ if the minimum distance between any two points in separate clusters is greater than the maximum distance between two points in the same cluster. 

We now show that if the optimal clustering over the set $X$ is $\sigsep$, then for small enough $\alpha$ depending on the value of $\sigma$, the optimal clustering over $X$ and $\tX$ will be the same. 

First note that, if a $\sigsep$ clustering $C$ exists for a given set of inputs, then only one such paritioning of the inputs. 

\begin{lemma}
    \cite{ackerman2009clusterability} If there exists a k-clustering $C$ of $X$, for $k \geq 2$, such that $C$ is $\sigsep$, then there is only one such partitioning of $X$.
\end{lemma}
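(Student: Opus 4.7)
The plan is to prove the lemma by contradiction: assume two distinct $k$-clusterings $C$ and $C'$ of $X$ are both $\sigma$-separable, and derive contradictory inequalities on their widths. My first step is a combinatorial observation about partitions of equal size: if $C = \{X_1, \ldots, X_k\}$ and $C' = \{Y_1, \ldots, Y_k\}$ are distinct partitions of the same set into the same number of parts, then neither can refine the other, so there must exist \emph{both} a pair $x, y$ in the same $X_i$ but different $Y$-parts, and a pair $a, b$ in the same $Y_j$ but different $X$-parts. (If only one direction of discrepancy occurred, $C$ would refine $C'$ or vice versa, and equal cardinality would force $C = C'$.)

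Next, I would use the $\sigma$-separability on each pair in the obvious way. For the pair $x, y$: since they lie in the same $C$-cluster, $|x - y| \leq \mathrm{width}_C(X)$; since they lie in different $C'$-clusters, $|x - y| \geq \mathrm{split}_{C'}(X) > \sigma \cdot \mathrm{width}_{C'}(X)$. Chaining these and using $\sigma \geq 1$ yields
\begin{equation*}
\mathrm{width}_C(X) \;\geq\; |x - y| \;>\; \sigma \cdot \mathrm{width}_{C'}(X) \;\geq\; \mathrm{width}_{C'}(X).
\end{equation*}
The same argument applied to the pair $a, b$, with the roles of $C$ and $C'$ swapped, gives $\mathrm{width}_{C'}(X) > \mathrm{width}_C(X)$, which directly contradicts the previous inequality. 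Hence no two distinct $\sigma$-separable $k$-clusterings can coexist.

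The only real subtlety, and the step I expect to be the main obstacle, is the combinatorial lemma that justifies simultaneously finding pairs of both types when $C \neq C'$ have the same number of parts. A clean way to phrase this is via the refinement order on partitions: if every $C$-cluster were entirely contained in some $C'$-cluster, then $C$ would refine $C'$; since both have exactly $k$ blocks, this forces $C = C'$. By symmetry, the negation of $C = C'$ yields discrepancies in both directions, which is precisely what the width-comparison step needs. Edge cases (singleton clusters, where width equals $0$ by convention) cause no trouble, since the strict inequality $\mathrm{split} > \sigma \cdot \mathrm{width}$ still applies and the contradiction argument goes through verbatim.
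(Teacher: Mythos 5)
Your proof is correct. The paper itself gives no in-text argument for this lemma---it simply cites \cite{ackerman2009clusterability}---so there is no paper proof to compare against, but your argument is exactly the standard one: the refinement-order observation correctly yields discrepancy pairs in both directions whenever $C \neq C'$ have equal block count, and chaining $\mathrm{width}_C(X) \geq |x-y| \geq \mathrm{split}_{C'}(X) > \sigma\cdot\mathrm{width}_{C'}(X) \geq \mathrm{width}_{C'}(X)$ with its symmetric counterpart gives the contradiction $\mathrm{width}_C(X) > \mathrm{width}_{C'}(X) > \mathrm{width}_C(X)$. Your remark on the singleton edge case is also right, and in fact if $\mathrm{width}_C(X)=0$ every $C$-block is a singleton, which forces $|X|=k$ and hence $C=C'$, so that case never arises under the assumption $C\neq C'$.
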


The proof of the above Lemma can be found in \cite{ackerman2009clusterability}. We now show that if $\alpha < \frac{split_C(X) - width_C(X)}{4}$, then the same paritioning over $\tX$ is also $\sigsep$ and hence the optimal clustering for both $X$ and $\tX$ are identical.

Let $s_1$ and $s_2$ be the two points in $X$ such that the distance between them is minimum among all pairs of points in separate clusters ($|s_2 - s_1| = split_C(X)$). Let $w_l$ and $w_2$ be two points in $X$ such that the distance between them is maximum among all pairs of points in the same cluster.

For the points in $X$ to have a $\sigsep$ clustering in $\tX$, the following condition should hold in the worst case,

\[
 s_2 (1-\alpha) - s_1 (1+\alpha) > w_2 (1+\alpha) - w_1 (1 -\alpha)
\]
\[
\implies (s_2 - s_1) - \alpha (s_1 + s_2) > (w_2 - w_1) + \alpha (w_1 + w_2)
\]

\[
\implies \alpha < \frac{(s_2 - s_1) - (w_2 - w_1)}{4} = \frac{split_C(X) - width_C(X)}{4}
\]

This implies that if the original clusters are $\sigsep$ and $\alpha$ is small enough then the optimal cluster memberships over $X$ and $\tX$ will be identical.

\end{document}